\DeclareMathOperator*{\argmin}{arg\,min}
\newcommand{\lattype}{Proposition Conserving}
\newtheorem{theorem}{Theorem}
\theoremstyle{definition}
\newtheorem{defn}{Definition} 
\newtheorem{prop}{Proposition} 
\newtheorem{exmpl}{Example} 
\def\BState{\State\hskip-\ALG@thistlm}
\begin{document}
%
\newcommand{\sscomment}[1]{\textcolor{red}{[Sid] #1}}

\title{Hierarchical Expertise-Level Modeling for User Specific Robot-Behavior Explanations}
%
%
%
%
%
%
%
%

\author{
Sarath Sreedharan \and Siddharth Srivastava \and Subbarao Kambhampati \\
School of Computing, Informatics, and Decision Systems Engineering\\
Arizona State University, Tempe, AZ 85281 USA\\
{ \{ ssreedh3, siddharths, rao  \} @ asu.edu}
}
\maketitle
\begin{abstract}
There is a growing interest within the AI research community to develop autonomous systems capable of explaining their behavior to users. One aspect of the explanation generation problem that has yet to receive much attention is the task of explaining plans to users whose level of expertise differ from that of the explainer. We propose an approach for addressing this problem by representing the user's model as an abstraction of the domain model that the planner uses. We present algorithms for generating minimal explanations in cases where this abstract human model is not known. 
We reduce the problem of generating explanation to a search over the space of abstract models and investigate possible greedy approximations for minimal explanations. We also empirically show that our approach can efficiently compute explanations for a variety of problems.
\end{abstract}

\section{Introduction}
AI systems have the potential to transform society by assisting humans in diverse situations ranging from extraplanetary exploration to assisted living. In order to achieve this potential, however, humans working with such systems need to be able to understand them just as they would understand human team members. This presents a number of challenges because most humans do not understand AI algorithms and their behavior at the same intuitive level that they understand other humans. Recently, there have been attempts to bridge this gap by developing systems capable of explaining its behavior. Most recently \cite{explain} formulated the problem of generating explanations for plans as that of model reconciliation. Their approach relied on identifying ways of bringing the human model (i.e the explainee model) closer to the robot model so that the plan in question appears optimal in the new model. Their work looked at scenarios in which the human used a model of the domain that was at the same level of fidelity as the one used by the agent to generate the plan. This approach, unfortunately, did not capture scenarios where the human possessed a lower level of expertise and thus used a more ``abstract" or coarser representation of the model as compared to the AI agent.

In this paper, we propose a new approach to this problem where the agent explains its ongoing or planned behavior to humans with arbitrary levels of expertise. We consider explanations in the framework of counterfactual reasoning, where a user who is confused by the agent\textquotesingle s activity (or proposed activity) presents alternative behavior that they would have expected the agent to execute. This aligns with the widely held belief that humans expect explanations to be contrastive \cite{miller}. In keeping with the terminology used in social sciences literature we will call the set of alternative behavior as a {\em foil} to the proposed robot behavior.

For instance, consider a mission-control operator who needs to supervise the activity of an autonomous robot on Mars in the midst of a sandstorm that could present valuable data for analysis. If the robot proposes to go back to the base before going to a vantage point for observing the storm, the operator would naturally be perplexed, and may be motivated to ask, why doesn\textquotesingle t the robot go directly to the vantage point?! Similarly, a human team member at a manufacturing plant may be perplexed by a robot\textquotesingle s unnecessary detours while assembling an automobile engine. Not only do such situations involve personnel with varying skill levels, they also place a premium on the size of explanations. 

A natural interaction would have the robot present an explanation about why the human\textquotesingle s counterfactual suggestion would not apply in the current situation. This explanation could involve facts about the environment as well as about the robot\textquotesingle s constraints. E.g., ``I need to get a new battery pack to observe the sandstorm for at least 30 minutes without interruption". Such explanations need to be attuned to the level of understanding of the human involved. If the operator happens to be the lead designer of the robot\textquotesingle s sequential decision-making engine, the robot could provide more specific information, e.g. ``I am carrying battery-pack \#00920", because this operator knows that some battery packs wouldn\textquotesingle t allow it to carry out the full observation.

In this paper we present the \textbf{Hierarchical Expertise-Level Modeling} or the \textbf{HELM} approach for facilitating such context and user-specific explanations. HELM generates the appropriate explanation by searching through a \emph{model lattice} of possible abstractions of the agent\textquotesingle s model. Each model within this lattice represents a different level of understanding of the task, with the highest fidelity representation (corresponding to the most detailed understanding of the domain used by the robot) forming the base of the lattice and the model representing the most naive understanding of the task (for example one held by a lay user) forming the highest node. We assume that the user\textquotesingle s understanding of the domain will align with one of these abstracted models. 


Our explanations consist of information that may be absent in the user\textquotesingle s abstract model, and show why the foil doesn\textquotesingle t apply in the true situation. These explanations will cause the user's model to shift to a more accurate model in the lattice (and ultimately achieve model reconciliation). We will refer to model updates constituting these explanations as \emph{model concretizations}. Our framework can also be extended to situations where a user\textquotesingle s understanding is abstract and erroneous. In this paper, we focus on the fundamental aspects of the problem and restrict our attention to settings where the user\textquotesingle s  understanding is a sound abstraction of the actual situation. Since the user\textquotesingle s level of expertise is unknown to the agent, our algorithm estimates the human model before searching for an explanation.

The rest of this paper is structured as follows. In section 2, we present the formal framework we use to study this problem. Section 3 will cover different approaches for generating explanation and in section 4 we present empirical evaluation of these methods on standard IPC domains. Finally in sections 5 and 6, we will discuss the related work and  possible future direction for this work.

\begin{figure}
\centering
\includegraphics[width=\columnwidth]{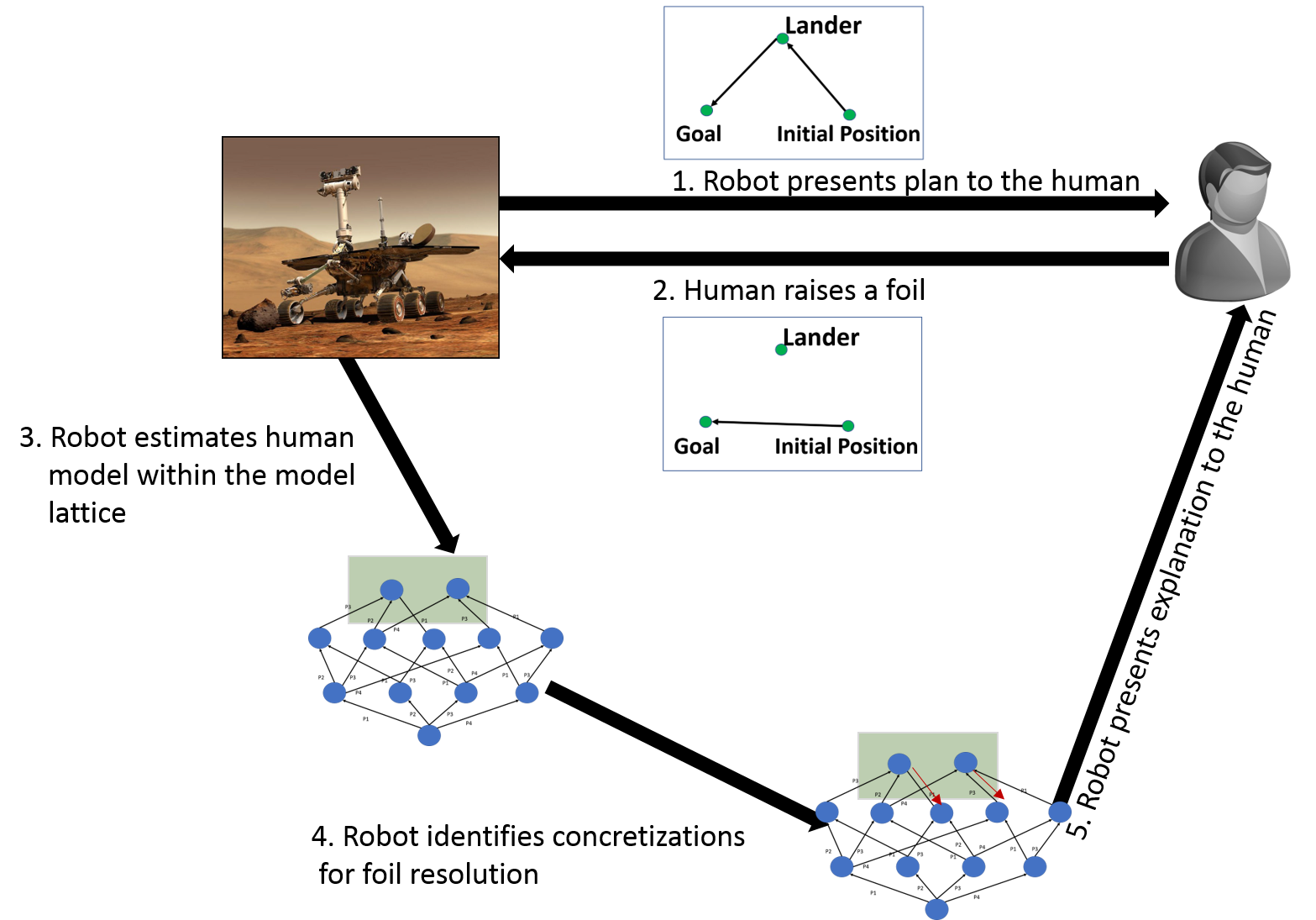}
\caption{{\small
An illustration of the hierarchical explanation process. The human observer who views the task at a higher level of abstraction expects the rover to execute a different plan from the one chosen by the rover. The rover presents the human with an explanation it believes will help resolve the foils in the human's updated model.}}
\label{img}
\end{figure}

\section{Hierarchical Expertise-Level Models}
In this work, we will focus on abstractions that form models by projecting out state fluents. While the presentation in the following sections is equally valid for both predicate and propositional abstractions, we will focus on propositional abstractions to keep our formulation clear and concise. We will look at planning models of the form $\mathcal{M} = \langle P, S, A, I, G\rangle$ where P gives the set of state fluents, S the set of possible states, $A$ the set of actions, $I$ the initial state and $G$ the goal. Each state $s \in S$ is uniquely represented by the set of propositions that are true in that state, i.e, $s \subseteq P$.

Each action $a\in A$ is associated with a set of preconditions $\textrm{prec}_{a}$ that needs to hold for the effects ($e_{a}$) of that action to be applied to a particular state. Each effect set $e_a$ can be further separated into a set of add effects $e_{a}^{+}$ and delete effects $e_{a}^{-}$. The result of executing an actions $a$ on a state $s$ in this setting is defined as follows 
\begin{align*}
a(S) = 
\begin{cases}
      (S \cup e_{a}^{+}) \setminus e_{a}^{-}, & \text{if}\ \textrm{prec}_{a}\subseteq  S\\
      S & \text{otherwise}
    \end{cases}
\end{align*}
A plan $\pi$ is defined as a sequence of actions ($\langle a_1,..,a_n\rangle$, $n$ being the size of the plan), and a plan is said to be executable for the problem $\mathcal{M}$ (i.e, $\pi(I) \models_{\mathcal{M}} G$) if $\pi(I) \supseteq G$.

Following works like \cite{seipp2013counterexample,backstrom2013bridging}, we will also use the concept of a transition system induced by the planning model to describe ideas related to abstraction. Intuitively, a transition system constitutes a graph where the nodes represent possible states, and the edges capture the transitions between the states that are valid in the corresponding planning model. We refer the readers to the previously mentioned works for further analyses of state transition systems and their connection to abstractions.

\begin{defn}
{\em 
 For a set of states $S$, a set $X$ is said to be a \textbf{propositional abstraction} of $S$ with respect to some set of propositions $\Lambda$, if there exist a surjective mapping $f_{\Lambda}: S \rightarrow X$, such that for every state $s \in S$, there exists a state $f_{\Lambda}(s) \in X$ where $ f_{\Lambda}(s) = s \setminus \Lambda$
}
\end{defn}
For notational convenience we will refer to the set of states obtained by abstracting out the proposition set $\Lambda$ from some set of states $S$ as $[S]_{f_{\Lambda}}$.
\begin{defn}
\label{model_abs}
{\em 
For a planning model $\mathcal{M} = \langle P, S, A, I, G\rangle$ with a corresponding transition system $\mathcal{T}$, a model $\mathcal{M}' = \langle P', S', A', I', G'\rangle$ with a transition system $\mathcal{T}'$ is considered an \textbf{abstraction of $\pmb{\mathcal{M}}$}, if there exist a set of propositions $\Lambda$, such that $P' = P - \Lambda$, $S' = [S]_{f_{\Lambda}}$, $I' = f_{\Lambda}(I)$, $G' = f_{\Lambda}(G)$ and for every transition $s_1 \xrightarrow{a} s_2$ in $\mathcal{T}$ corresponding to an action $a$, there exists an equivalent transition  $(s_1 \setminus \Lambda) \xrightarrow{a} (s_2 \setminus \Lambda)$ in $\mathcal{T}'$. \\
}
\end{defn}
As per Definition \ref{model_abs}, abstractions induce an ``imprecise" model of the underlying domain. All plans that were valid in the original model will have an equivalent plan in this new model. We will use the operator $\sqsubset$ to capture the fact that a model $\mathcal{M}$ is less abstract than the model $\mathcal{M}'$, i.e if $\mathcal{M} \sqsubset \mathcal{M}'$ then there exist a set of propositions $\Lambda$ such that $\mathcal{M}' = [\mathcal{M}]_{f_{\Lambda}}$.
With the definition of abstraction and related notations in place, we are now ready to define a model lattice. Most approaches discussed in this paper will rely on this lattice to both estimate human's model and to identify explanations.
\begin{defn}
{\em 
For a model $\mathcal{M}^{\#}$, the \textbf{model lattice} $\pmb{\mathcal{L}}$ is a tuple of the form $\mathcal{L} = \langle \mathbb{M}, \mathbb{E}, \mathbb{P}, \ell \rangle$, where $\mathbb{M}$ is the set of lattice nodes ,$\mathbb{E}$ the lattice edges, $\mathbb{P}$ the superset of propositions considered for abstraction within this lattice and finally $\ell$ is a function mapping edges to labels, provided $\mathcal{M}^{\#}\in\mathbb{M}$ and all nodes $\mathcal{M}' \in \mathbb{M}$ satisfy the condition $\mathcal{M}^{\#} \sqsubseteq \mathcal{M}'$. Additionally, for each edge $e_i = (\mathcal{M}_i, \mathcal{M}_j)$ there exists a proposition $p \in \mathbb{P}$ such that $[\mathcal{M}_i]_{f_{p}} = \mathcal{M}_j$ and $\ell(\mathcal{M}_i, \mathcal{M}_j) = p$.
}
\end{defn}
Thus each edge in this lattice corresponds to an abstraction formed by projecting out a single proposition (represented by the label of the edge). We can also define a concretization function $\gamma_{p}$ that retrieves the model that was used to generate the given abstract model by projecting out the proposition $p$, i.e, $\gamma_{p}(\mathcal{M}) = \mathcal{M}'$ if $(\mathcal{M}',\mathcal{M}) \in \mathbb{E}$ and $\ell(\mathcal{M}',\mathcal{M})=p$.

Through the rest of this work, we will make some assumptions on the structure of the lattice $\mathcal{L}$ and the abstraction methods used by $\mathcal{L}$ to simplify our discussions. In this paper, we will focus on lattices where each node in $\mathbb{M}$ has an incoming edge for every proposition missing from its corresponding model. We will refer to lattices that satisfy this property as \textbf{\lattype}~ lattices. Additionally, we will call a proposition conserving lattice that contains an abstract node corresponding to each possible subset of $\mathbb{P}$ as the \textbf{Complete Lattice} for $\mathcal{M}$ given $\mathbb{P}$.

A lattice $\mathcal{L}$ is \lattype, if for any model $\mathcal{M}\in \mathbb{M}$ and $\forall p\in \mathbb{P}$, if $p$ not in $P_{\mathcal{M}}$ then there exists a model $\mathcal{M}'\in\mathbb{M}$, such that $(\mathcal{M}',\mathcal{M})\in\mathbb{E}$ and $\ell(\mathcal{M}',\mathcal{M})=p)$.
Notice that enforcing conservation of propositions doesn't require any further assumptions about the human model and can be easily ensured by the agent generating the lattice.

We also assume that all abstraction functions used in generating the models in the lattice are commutative and idempotent, i.e., $[[\mathcal{M}]_{f_{p_1}}]_{f_{p_2}} =  [[\mathcal{M}]_{f_{p_2}}]_{f_{p_1}}$ and $[[\mathcal{M}]_{f_{p_1}}]_{f_{p_1}} =  [\mathcal{M}]_{f_{p_1}}$.
Readers can refer to \cite{srivastava2016metaphysics} for a  comprehensive list of ways to generate imprecise abstract models that satisfy these properties. 

As mentioned earlier, we consider an explanation generation setting where the human observer uses a task model (denoted as $\mathcal{M}_{H} = \langle P_{H}, S_{H}, A_{H}, I_{H}, G_{H}\rangle$), that is a more abstract version of the robot's model ($\mathcal{M}_{R} = \langle P_{H}, S_{R}, A_{R}, I_{R}, G_{R}\rangle$). While the robot may not have access to $\mathcal{M}_H$, it understands that $\mathcal{M}_H$ is a member of the set $\mathbb{M}$ for the lattice $\mathcal{L}$. The human comes up with the \textbf{foil set F } $= \{ \pi_1,\pi_2,...,\pi_m\}$ that the robot needs to refute by providing some explanation $E$ regarding the task. The explanation should contain information about specific domain properties (i.e., state fluents) that are missing from the human's model and how these properties affect different actions (for example which actions use these propositions as preconditions and which ones generate/delete them).
 We can represent such an explanation using the set of propositions whose concretization is required to refute the given foils.
\begin{defn}
{\em 
An \textbf{explanation E }of size $n$ for the human model $\mathcal{M}_H$ and a foil set $F$ can be represented as a set of propositions of the form $E=\{ p_1,..., p_n\}$ such that \\
$\forall \pi \in F, \pi(I_{\gamma_{E}(\mathcal{M}_H)}) \not\models_{\gamma_{E}(\mathcal{M}_H)} G_{\gamma_{E}(\mathcal{M}_H)}$\\
Where $\gamma_{E}(\mathcal{M}_H)$ is the model obtained by applying the concretizations corresponding to $E$ on the model $\mathcal{M}_H$ .
}
\end{defn}
\begin{exmpl}
\label{rov_exmpl}
Consider a simplified version of the rover domain mentioned earlier. Suppose the rover uses a modified version of the IPC rover domain \cite{ipc} that also takes into account the battery level of the robot. Each rover operation has a different energy requirement, and the battery level needs to be above a predefined threshold for it to execute them, e.g., it can perform rock sampling only if the battery level is above 75\%. Furthermore, the rover needs to visit the base station (i.e., the lander) and perform a reset action to recharge its batteries.

The rover knows that the human observer is at most ignorant of its energy requirements and/or storage capabilities. So the model lattice $\mathcal{L}$ needs to consider abstractions corresponding to the following propositions {\small $\mathbb{P}$=\{\textsf{battery\_level\_above\_25\_perc}, \textsf{battery\_level\_above\_50\_perc},  \textsf{battery\_level\_above\_75\_perc}, \textsf{full\_store}$\}$.}
Figure \ref{lattice} shows the lattice that the robot would use in this setting. Here we will create each abstract model by dropping a proposition from the more concrete model and by making the effects of action non-deterministic if the dropped predicate appears in the precondition. For example, if the action {\small \textsf{drop\_store1}} has effects of the form
{\small\begin{align*}
\{\textsf{full\_store1},~\textsf{store\_of\_store1}\}\rightarrow\{\neg\textsf{full\_store1},~\textsf{empty\_store1}\}
\end{align*}}
Now in an abstract version of this model, if the proposition {\small\textsf{full\_store1}} is dropped the effect becomes
{\small\begin{align*}
\{~\textsf{store\_of\_store1}\}\rightarrow~ND\{~\textsf{empty\_store1}\}
\end{align*}}
Which now says that the action's effects are non-deterministic  and executing {\small \textsf{drop\_store1}} may or may not turn the fluent {\small \textsf{empty\_store1}} true.\\
Let the plan $\pi_R$ be $\langle{\small\textsf{
navigate\_w0\_lander,
reset\_at\_lander,}}$ ${\small\textsf{
navigate\_lander\_w1,
sample\_rock\_store0\_w1}}\rangle$
and the foil set $F$ be $\{\langle
{\small\textsf{
navigate\_w0\_w1, navigate\_lander\_w1,}}$ ${\small\textsf{
sample\_rock\_store0\_w1
}}\rangle\}$
\end{exmpl}
In Example \ref{rov_exmpl}, the rover would have difficulty coming up with a single explanation as it does not know $\mathcal{M}_H$. One possibility would be to restrict its attention to just the models that are consistent with the foils 
. In this scenario, this would correspond to $\{c6, c7, c9, c10, c11, c12\}$.
\begin{figure}[tbp]
\centering
\includegraphics[width=0.70\columnwidth]{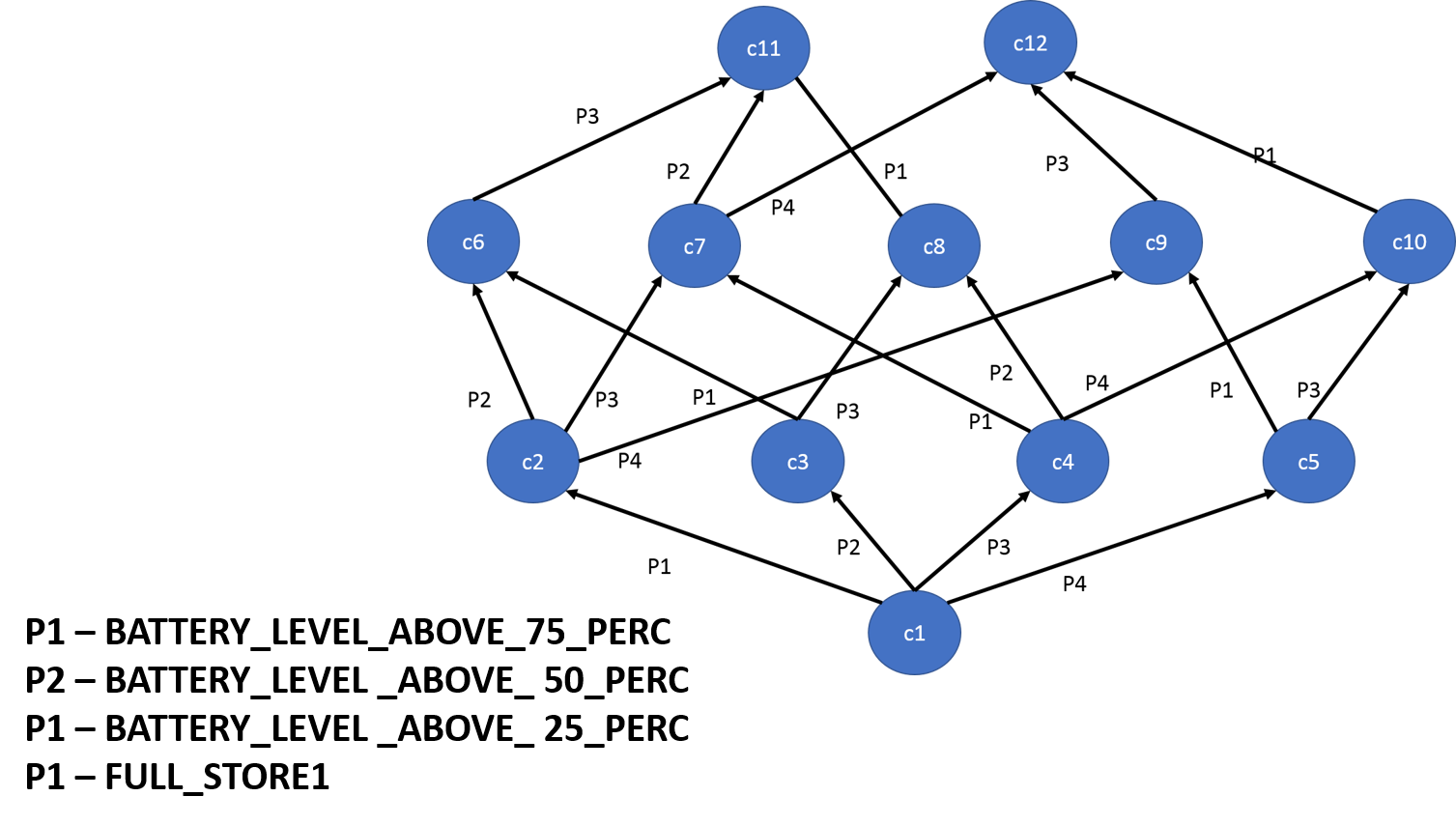}
\caption{{\small A possible abstraction lattice for the  grounded rover domain.}}
\label{lattice}
\end{figure}
Now we need to find a way of generating explanations given this reduced set of models. 
\begin{prop}
\label{prop_abstract}
Let $\mathcal{M}_{i}$ be some model in $\mathcal{L}$ such that $\mathcal{M}_{H} \sqsubseteq \mathcal{M}_{i}$. If $E$ is a valid explanation for $\mathcal{M}_{i}$ and some foil set $F$, then $E$ must also explain $F$ for $\mathcal{M}_{H}$.
\end{prop}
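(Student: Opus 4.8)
The plan is to reduce the claim to the basic property of abstractions recorded just after Definition~\ref{model_abs} --- every plan executable in a model is also executable in any abstraction of it --- by first showing that $\gamma_{E}(\mathcal{M}_i)$ is itself an abstraction of $\gamma_{E}(\mathcal{M}_H)$, and then taking a contrapositive over the foils in $F$. So the first thing I would establish is the purely structural fact: if $\mathcal{M}_H \sqsubseteq \mathcal{M}_i$ and $E$ is concretizable in $\mathcal{M}_i$, then $\gamma_{E}(\mathcal{M}_H) \sqsubseteq \gamma_{E}(\mathcal{M}_i)$.

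For that step I would track, for each node $\mathcal{M}$ of $\mathcal{L}$, the set $P^{-}(\mathcal{M})$ of base-model propositions it projects out. Because the abstraction functions are commutative and idempotent, a lattice node is determined by this set, $\mathcal{M} \sqsubseteq \mathcal{M}'$ holds exactly when $P^{-}(\mathcal{M}) \subseteq P^{-}(\mathcal{M}')$, and applying the concretization $\gamma_{E}$ has the effect $P^{-}(\gamma_{E}(\mathcal{M})) = P^{-}(\mathcal{M}) \setminus E$ --- and this remains true even when $E$ mentions a proposition already present in $\mathcal{M}$, since concretizing along such a proposition does nothing. From $\mathcal{M}_H \sqsubseteq \mathcal{M}_i$ we get $P^{-}(\mathcal{M}_H) \subseteq P^{-}(\mathcal{M}_i)$, hence $P^{-}(\gamma_{E}(\mathcal{M}_H)) = P^{-}(\mathcal{M}_H) \setminus E \subseteq P^{-}(\mathcal{M}_i) \setminus E = P^{-}(\gamma_{E}(\mathcal{M}_i))$, i.e.\ there is a proposition set $\Lambda$ (disjoint from the propositions missing from $\gamma_{E}(\mathcal{M}_H)$) with $\gamma_{E}(\mathcal{M}_i) = [\gamma_{E}(\mathcal{M}_H)]_{f_{\Lambda}}$.

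With that in hand the rest is short. I would fix $\pi \in F$ and argue the contrapositive: if $\pi$ were executable in $\gamma_{E}(\mathcal{M}_H)$, then by Definition~\ref{model_abs} the trajectory $\pi$ induces from $I_{\gamma_{E}(\mathcal{M}_H)}$ lifts transition by transition to a trajectory from $I_{\gamma_{E}(\mathcal{M}_H)} \setminus \Lambda = I_{\gamma_{E}(\mathcal{M}_i)}$ in $\gamma_{E}(\mathcal{M}_i)$, and since the final state $s$ of the concrete trajectory satisfies $s \supseteq G_{\gamma_{E}(\mathcal{M}_H)}$ we also get $s \setminus \Lambda \supseteq G_{\gamma_{E}(\mathcal{M}_H)} \setminus \Lambda = G_{\gamma_{E}(\mathcal{M}_i)}$; so $\pi$ would be executable in $\gamma_{E}(\mathcal{M}_i)$ --- contradicting that $E$ is a valid explanation for $\mathcal{M}_i$ and $F$. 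Hence no $\pi \in F$ is executable in $\gamma_{E}(\mathcal{M}_H)$, which is exactly what it means for $E$ to explain $F$ for $\mathcal{M}_H$. The place I expect to have to be careful is the bookkeeping of the middle step --- that concretizing two $\sqsubseteq$-comparable models by the same set $E$ again yields $\sqsubseteq$-comparable models, handling cleanly the degenerate case where a proposition of $E$ sits in $\mathcal{M}_H$ but was abstracted away in $\mathcal{M}_i$ (so $\gamma_{E}(\mathcal{M}_H)$ must be read as concretizing only along the propositions of $E$ genuinely missing from $\mathcal{M}_H$). The commutativity and idempotence assumptions on the abstraction functions are precisely what make this go through; everything after it is the routine plan-lifting argument for abstractions together with a contrapositive over $F$.
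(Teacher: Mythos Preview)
Your proposal is correct and is essentially an elaborated version of the paper's own one-line justification, which simply states that in a proposition conserving lattice $\gamma_{E}(\mathcal{M}_i)$ is a logically weaker model than $\gamma_{E}(\mathcal{M}_H)$. You have unpacked precisely this claim --- showing $\gamma_{E}(\mathcal{M}_H) \sqsubseteq \gamma_{E}(\mathcal{M}_i)$ via the projected-out proposition sets and then applying the plan-lifting property of abstractions with a contrapositive --- so the approach is the same, just more carefully spelled out (including the degenerate-case bookkeeping the paper leaves implicit).
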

This proposition directly follows from the fact that for a proposition conserving lattice $\gamma_{E}(\mathcal{M}_{i})$ will be a logical weaker model than $\gamma_{E}(\mathcal{M}_{H})$. 

Next, we will define the concept of a minimal abstraction set for a given lattice $\mathcal{L}$ and foils $F$
\begin{defn}
{\em 
Given an the abstraction lattice $\mathcal{L} = \langle\mathbb{M},\mathbb{E},\mathbb{P},\ell\rangle$ the {\em\textbf{minimal abstraction set $\mathbb{M}_{min}$}} is the supremum of all the models that are consistent with the foil set $F$.\\
$\mathbb{M}_{min} = \sup\{\mathcal{M}_i|\mathcal{M}_i \in \mathbb{M}, \forall \pi \in F (\pi(I_{\mathcal{M}_i}) \models_{\mathcal{M}_i} G_{\mathcal{M}_i}) \}$
}
\end{defn}
In Example \ref{rov_exmpl}, the minimal abstract model set will be $\mathbb{M}_{min} = \{c11, c12\}$.

If we can find an explanation that is valid for all the models in $\mathbb{M}_{min}$ then by Proposition \ref{prop_abstract} it must work for $\mathbb{M}_{H}$ as well. 
\begin{prop}
\label{prop_existence}
For a given model lattice $\mathcal{L}$, the minimal abstraction set $\mathbb{M}_{min}$ and a set of foils $F$, there exist an explanation $E$ such that for each $\mathcal{M}' \in \mathbb{M}_{min}$ and $\forall \pi \in F~$, \\
$\pi(I_{\gamma_{E}(\mathcal{M}')}) \not\models_{\gamma_{E}(\mathcal{M}')} G_{\gamma_{E}(\mathcal{M}')}$
\end{prop}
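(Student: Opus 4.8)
The plan is to give a constructive proof: I will exhibit one explanation $E^{\star}$ that simultaneously drives every model of $\mathbb{M}_{min}$ all the way down to the robot's ground model $\mathcal{M}_{R}$ (the base $\mathcal{M}^{\#}$ of the lattice), in which, by the standing assumption of the explanation problem, none of the foils is executable. Concretely, take $E^{\star} = \bigcup_{\mathcal{M}_i \in \mathbb{M}_{min}} (\mathbb{P} \setminus P_{\mathcal{M}_i})$ --- the union, over the models in $\mathbb{M}_{min}$, of the propositions each of them is missing (the even coarser choice $E^{\star} = \mathbb{P}$ would also work). The core of the argument is the claim that $\gamma_{E^{\star}}(\mathcal{M}') = \mathcal{M}^{\#}$ for every $\mathcal{M}' \in \mathbb{M}_{min}$.

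To see this, I would first use the \lattype{} property of $\mathcal{L}$: for any node $\mathcal{M}$ and any $p \in \mathbb{P} \setminus P_{\mathcal{M}}$ there is an incoming edge labeled $p$, so $\gamma_{p}(\mathcal{M})$ is again a node of $\mathbb{M}$; iterating, one can restore every missing proposition without leaving $\mathbb{M}$. Commutativity and idempotence of the abstraction maps then make $\gamma_{E^{\star}}$ depend only on the set $E^{\star}$ (not on the order of restoration) and make the restoration of an already-present proposition vacuous (extending $\gamma_p$ by the natural convention that it is the identity on models already containing $p$, consistent with idempotence); hence $\gamma_{E^{\star}}$ is well defined simultaneously on all the (in general several, $\sqsubseteq$-incomparable) models of $\mathbb{M}_{min}$, and on such an $\mathcal{M}'$ it has the same effect as restoring exactly $\mathbb{P} \setminus P_{\mathcal{M}'}$. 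The resulting node $\mathcal{N}$ then satisfies $\mathbb{P} \subseteq P_{\mathcal{N}}$; but every node of $\mathbb{M}$ other than $\mathcal{M}^{\#}$ equals $[\mathcal{M}^{\#}]_{f_{\Lambda}}$ for some nonempty $\Lambda \subseteq \mathbb{P}$ (all abstractions in $\mathcal{L}$ use propositions drawn from $\mathbb{P}$), hence misses at least one proposition of $\mathbb{P}$. Therefore $\mathcal{N} = \mathcal{M}^{\#} = \mathcal{M}_{R}$, as claimed.

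Finally I would invoke the standing assumption that $F$ consists precisely of the behaviors the robot must refute --- equivalently, that no foil is executable in $\mathcal{M}_{R}$, i.e. $\pi(I_{\mathcal{M}_R}) \not\models_{\mathcal{M}_R} G_{\mathcal{M}_R}$ for all $\pi \in F$. Since $\gamma_{E^{\star}}(\mathcal{M}') = \mathcal{M}_R$ for every $\mathcal{M}' \in \mathbb{M}_{min}$, this immediately gives $\pi(I_{\gamma_{E^{\star}}(\mathcal{M}')}) \not\models_{\gamma_{E^{\star}}(\mathcal{M}')} G_{\gamma_{E^{\star}}(\mathcal{M}')}$ for all $\pi \in F$ and all $\mathcal{M}' \in \mathbb{M}_{min}$, so $E^{\star}$ witnesses the proposition. (If $\mathbb{M}_{min} = \emptyset$, i.e. no lattice node is consistent with $F$, the statement holds vacuously.)

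The main obstacle is not any single computation but the bookkeeping around a single set $E^{\star}$ being "applied" to several incomparable models at once: one has to fix the convention that concretizing a proposition already present is the identity, and check that this is consistent with commutativity and idempotence, so that $\gamma_{E^{\star}}$ is genuinely well defined on all of $\mathbb{M}_{min}$ and always terminates at $\mathcal{M}^{\#}$. The only other point requiring care is to state explicitly the assumption that the foils fail in the robot's ground model; the proposition is false without it, since no amount of concretization can refute a foil that is already executable in $\mathcal{M}_{R}$ itself.
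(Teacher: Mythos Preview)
Your proposal is correct and follows essentially the same approach as the paper: exhibit an explanation that concretizes every proposition in $\mathbb{P}$ so that each model in $\mathbb{M}_{min}$ collapses to the base model $\mathcal{M}^{\#}$, where the foils fail by assumption. The paper's own justification is a single sentence (``any explanation that involves concretizing all possible propositions in $\mathbb{P}$ satisfies this property''); your write-up simply makes explicit the bookkeeping (proposition conservation, commutativity/idempotence, the convention that restoring an already-present proposition is the identity, and the standing assumption that foils fail in $\mathcal{M}_R$) that the paper leaves implicit.
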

It is easy to see why this property holds, as any explanation that involves concretizing all possible propositions in $\mathbb{P}$ satisfies this property. 

In most cases, we would prefer to get the least costly or the shortest explanation (if all concretizations are equally expensive) to the listener. In the rover example, even if the human is unaware of multiple task details, the robot can easily resolve the user's doubts by explaining the concretizations related to the proposition {\small\textsf{battery\_level\_above\_75\_perc}} without getting into other details. Describing the details of others propositions is unnecessary and in the worst case might leave the human feeling overwhelmed and confused. In this case, the explanation would just include
Information regarding battery levels and 
how to identify when the battery level 
is or above 75\% and
model updates like \\
{\small\textsf{
sample\_rock-has-precondition-battery\_level\_above\_75\_perc\\
sample\_soil-has-precondition-battery\_level\_above\_75\_perc\\
...
}}\\
Before delving into the optimization version, let us look at the complexity of the corresponding decision problem
\begin{theorem}
Given a minimal abstraction set $\mathbb{M}_{min}$, a plan $\pi_R$, the set of propositions being abstracted $\mathbb{P}$ and the set of foils $F$ for a model $\mathcal{M}$, the problem of identifying whether an explanation of size $k$ exists for the complete lattice is \textbf{NP-complete}.
\end{theorem}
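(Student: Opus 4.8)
The plan is to establish membership in NP and then NP-hardness by a reduction from a known NP-complete problem. Membership is the easy direction: given a candidate explanation $E \subseteq \mathbb{P}$ with $|E| \le k$, we can verify in polynomial time that $E$ is a valid explanation for every model in $\mathbb{M}_{min}$ — for each $\mathcal{M}' \in \mathbb{M}_{min}$ we compute $\gamma_E(\mathcal{M}')$ (apply the concretizations corresponding to the propositions in $E$; since the lattice is the complete lattice, the relevant nodes exist and the abstraction functions are commutative and idempotent so the result is well-defined) and then, for each foil $\pi \in F$, simulate the action sequence on $I_{\gamma_E(\mathcal{M}')}$ and check that the result does not entail $G_{\gamma_E(\mathcal{M}')}$. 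Plan simulation is linear in the plan length times the state size, and $|\mathbb{M}_{min}|$ and $|F|$ are part of the input, so the whole check is polynomial. Hence the problem is in NP.

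For hardness, I would reduce from \textbf{Vertex Cover} (or equivalently Set Cover / Hitting Set). The intuition is that each foil plays the role of an element (or edge) that must be ``killed'' by some concretization, and each proposition $p \in \mathbb{P}$ plays the role of a set (or vertex) that kills exactly those foils it is incident to; asking for an explanation of size $k$ then becomes asking for a cover of size $k$. Concretely, given a graph $G = (V,E)$ with a target $k$, I would build a planning model $\mathcal{M}$ whose proposition set $\mathbb{P}$ contains one abstraction-candidate proposition $p_v$ per vertex $v$, together with enough auxiliary ``base'' propositions to encode a trivial goal-reaching plan $\pi_R$. For each edge $\varepsilon = (u,w) \in E$ I would craft a foil $\pi_\varepsilon$ that succeeds in reaching the goal in any model where \emph{both} $p_u$ and $p_w$ are still abstracted away (so the relevant preconditions are non-deterministically satisfiable / vacuous), but fails as soon as \emph{at least one} of $p_u, p_w$ has been concretized (reinstating a precondition that $\pi_\varepsilon$ violates). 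The most-abstract model consistent with all foils — i.e.\ the single element of $\mathbb{M}_{min}$ — is then the model in which all of $p_v$, $v \in V$, are abstracted out. With this gadget, a set $E \subseteq \{p_v : v\in V\}$ refutes all foils iff for every edge at least one endpoint lies in $E$, i.e.\ iff $E$ corresponds to a vertex cover; so an explanation of size $\le k$ exists iff $G$ has a vertex cover of size $\le k$.

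The main obstacle — and the part that needs genuine care rather than bookkeeping — is designing the edge gadget so that the ``fails iff at least one endpoint concretized'' behavior holds exactly, using only the abstraction mechanism sanctioned earlier (projecting out a proposition and making effects non-deterministic when the projected proposition was a precondition). I need $\pi_\varepsilon$ to be executable-to-goal in the fully abstract model, which is automatic by Definition~\ref{model_abs} provided $\pi_\varepsilon$ has a concrete realization or its blocking precondition is among the $p_v$; and I need that concretizing $p_u$ re-introduces a precondition on some action of $\pi_\varepsilon$ that the preceding prefix of $\pi_\varepsilon$ does not establish, so that action becomes a no-op and the goal is missed. A clean way to do this is to give each edge its own dedicated goal atom $g_\varepsilon$ and its own action $a_\varepsilon$ that achieves $g_\varepsilon$, let $\pi_R$ achieve all $g_\varepsilon$ via a route that legitimately establishes $p_u$ and $p_w$ first, and let each foil $\pi_\varepsilon$ take a shortcut that skips establishing exactly $p_u$ and $p_w$ while still invoking $a_\varepsilon$, whose precondition set contains both $p_u$ and $p_w$. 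One must also double-check monotonicity issues: concretizing a proposition only ever removes behaviors (Proposition~\ref{prop_abstract}), so adding more propositions to $E$ can never un-refute a foil, which is exactly what makes the reduction to a covering problem sound. Finally, I would verify that the constructed lattice is the complete lattice over $\mathbb{P}$ (every subset of $\mathbb{P}$ gives a node), that $\mathbb{M}_{min}$ is computed correctly in the reduction (it is the single top node where all $p_v$ are dropped, since every foil succeeds there and fails once any $p_v$ is reinstated), and that the whole construction is polynomial in $|V| + |E|$.
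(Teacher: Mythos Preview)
Your proposal is correct and takes essentially the same approach as the paper: polynomial-time certificate checking for NP membership, and a reduction from a covering problem for hardness. The paper reduces from Set Cover (with $U=F$ and one proposition per subcollection, asserting that concretizing a proposition resolves exactly the foils in its associated subset) whereas you reduce from Vertex Cover, but this is a cosmetic difference --- indeed, you spell out the action/precondition gadget that the paper's sketch leaves implicit, and you correctly identify the monotonicity property (Proposition~\ref{prop_abstract}) that makes the covering correspondence sound.
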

\begin{proof}[Proof (Sketch)]
The fact that we can test the validity of the given explanation in polynomial time (size of the explanation is guaranteed to be smaller than $|\mathbb{P}|$) shows that the problem is \textbf{NP}. 
We can show \textbf{NP-completeness} by reducing the set covering problem \cite{bernhard2008combinatorial} to an instance of the explanation generation problem. Let's consider a set covering problem with $U$ as the universe set and $S$ as the set of sub-collections. Now let us create an explanation generation problem where the set of foils $F$ is equal to $U$ and the propositions in the set $\mathbb{P}$ contains a proposition for each member of $S$. Additionally concretizing with respect to a proposition will resolve only the foils covered by its corresponding subset in $S$. For this setting, we can construct a fully connected proposition conserving lattice $\mathcal{L}$ of height $|S|$. Within the lattice, there exists a unique most abstract model where all the foils hold and a single most concrete model (where none of the foils hold). Now if we can come up with an explanation of size k in this setting, then this explanation corresponds to a set cover of size k.
\end{proof}

\section{Generating Minimal Explanations}
As mentioned earlier, we are interested in producing the smallest possible explanation. Additionally, in most domains, the cost of communicating the concretization details could vary among propositions. An explanation that involves a proposition that appears in every action definition might be harder to communicate than one that only uses a proposition that is part of the definition of a single action.

In addition to the actual size, the comprehensibility of the explanations may also depend on factors like human's mental load, the familiarity with the concepts captured by the propositions, etc.. To keep our discussions simple, we will restrict the cost of communicating an explanation to the number of unique model updates this explanation would bring about in the human model.
 We will use the symbol $C_{p}$ to represent the cost of communicating the changes related to the proposition $p$. We will overload $C$ to also work on sets of propositions.

Now our problem is to find the cheapest explanation (represented as $E_{min}$) for a given set of foils $F$, and the minimal abstract model set $\mathbb{M}_{min}$. One possibility is to perform an A* search \cite{hart1968formal} 
over the space of possible propositional concretizations to identify $E_{min}$. Each search state consists of the minimal set of abstract models for the human model given the current explanation prefix. We will stop the search as soon as we find a state where the foils no longer hold for the current minimal set. 
\begin{prop}
Let $\mathbb{M}_{min}$ be the minimal abstraction set for a given lattice $\mathcal{L}=\langle \mathbb{M}, \mathbb{E}, \mathbb{P},\ell\rangle$ and foil set $F$. Then for a proposition $p$, the set $\widehat{\mathbb{M}}_{min}$ formed by applying the concretization corresponding to $p$ on every element of $\mathbb{M}_{min}$ will be the minimal abstract set for $\widehat{\mathbb{M}}$ formed by applying the concretization $\gamma_{p}$ on every element of $\mathbb{M}$ given $F$.
\end{prop}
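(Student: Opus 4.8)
The plan is to read this as a \emph{naturality} statement: the node-wise map $\mathcal{M}\mapsto\gamma_{p}(\mathcal{M})$ is a lattice morphism along which ``being consistent with $F$'' transports cleanly, and since the minimal abstraction set consists of the $\sqsubseteq$-maximal nodes that are consistent with $F$ (the supremum in the definition), the two operations commute. I would split the proof into (i) structural properties of $\gamma_{p}$ on the lattice, (ii) the behavior of consistency with $F$ under $\gamma_{p}$, and (iii) an order-theoretic argument joining the two.

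For (i), first fix the convention $\gamma_{p}(\mathcal{M})=\mathcal{M}$ whenever $p\in P_{\mathcal{M}}$, so that $\gamma_{p}$ is total on $\mathbb{M}$ and $\widehat{\mathbb{M}}=\{\gamma_{p}(\mathcal{M}):\mathcal{M}\in\mathbb{M}\}$. Using commutativity and idempotence of the abstraction functions together with the proposition-conserving property, I would argue that the induced edges make $\widehat{\mathbb{M}}$ itself a proposition-conserving lattice, that $\gamma_{p}$ is order preserving and surjective onto $\widehat{\mathbb{M}}$, and --- the key structural point --- that $\gamma_{p}$ is order \emph{reflecting} in the sense that whenever $\gamma_{p}(\mathcal{M})\sqsubseteq\mathcal{N}$ with $\mathcal{N}\in\widehat{\mathbb{M}}$ there is $\mathcal{M}'\in\mathbb{M}$ with $\mathcal{M}\sqsubseteq\mathcal{M}'$ and $\gamma_{p}(\mathcal{M}')=\mathcal{N}$. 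This is what allows maximality to be pushed through $\gamma_{p}$ in both directions.

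For (ii), since $\gamma_{p}(\mathcal{M})\sqsubseteq\mathcal{M}$, the argument behind Proposition~\ref{prop_abstract} applied with the empty explanation shows that any foil reaching the goal in $\gamma_{p}(\mathcal{M})$ also reaches the goal in $\mathcal{M}$; hence consistency with $F$ is upward closed (in $\mathbb{M}$ and in $\widehat{\mathbb{M}}$), and $\gamma_{p}(\mathcal{M})$ consistent implies $\mathcal{M}$ consistent. For (iii), $\widehat{\mathbb{M}}_{min}=\gamma_{p}(\mathbb{M}_{min})$ then follows by double inclusion. For $\supseteq$: a $\sqsubseteq$-maximal consistent $\mathcal{N}\in\widehat{\mathbb{M}}$ equals $\gamma_{p}(\mathcal{M}_{0})$ for some consistent $\mathcal{M}_{0}\in\mathbb{M}$ by surjectivity; $\mathcal{M}_{0}$ sits below some $\mathcal{M}_{1}\in\mathbb{M}_{min}$, so $\mathcal{N}\sqsubseteq\gamma_{p}(\mathcal{M}_{1})$, the latter consistent by upward closure, and maximality of $\mathcal{N}$ gives $\mathcal{N}=\gamma_{p}(\mathcal{M}_{1})\in\gamma_{p}(\mathbb{M}_{min})$. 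For $\subseteq$: for $\mathcal{M}\in\mathbb{M}_{min}$, $\gamma_{p}(\mathcal{M})$ is consistent, and any consistent $\mathcal{N}\sqsupseteq\gamma_{p}(\mathcal{M})$ in $\widehat{\mathbb{M}}$ pulls back by order reflection to a consistent $\mathcal{M}'\sqsupseteq\mathcal{M}$, forcing $\mathcal{M}'=\mathcal{M}$ and $\mathcal{N}=\gamma_{p}(\mathcal{M})$; thus $\gamma_{p}(\mathcal{M})$ is $\sqsubseteq$-maximal among the consistent nodes of $\widehat{\mathbb{M}}$.

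The step I expect to be the real obstacle is the first claim in the $\subseteq$ direction, namely that $\gamma_{p}(\mathcal{M})$ remains consistent with $F$ for every $\mathcal{M}\in\mathbb{M}_{min}$: concretizing a single proposition can in principle invalidate a foil, and this is exactly the boundary of the search where the prefix extended by $p$ starts refuting foils. I would handle it either by arguing from the supremum structure of $\mathbb{M}_{min}$ that such a $p$ cannot sit above a node of $\mathbb{M}_{min}$ without that node already having dropped lower in the lattice, or --- more conservatively --- by carrying the statement in the form $\widehat{\mathbb{M}}_{min}=\{\gamma_{p}(\mathcal{M}):\mathcal{M}\in\mathbb{M}_{min},\ \gamma_{p}(\mathcal{M})\text{ consistent with }F\}$, which is all the A* successor computation actually needs and whose proof is the double inclusion above. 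Everything else reduces to verifying the order-reflection property of $\gamma_{p}$ from commutativity, idempotence, and proposition conservation, which is bookkeeping.
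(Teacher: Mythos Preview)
The paper does not actually prove this proposition: it states it and immediately moves on to the remark that ``we don't need to look at the lattice $\mathcal{L}$ to recalculate minimal abstraction set after the application of every concretization function.'' So there is no paper proof to compare against; your proposal is already far more careful than anything the authors supply.

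Your decomposition into (i) $\gamma_{p}$ as an order-preserving, surjective, order-reflecting map, (ii) upward closure of $F$-consistency, and (iii) a double-inclusion argument is sound and is the natural way to make the statement precise. You also put your finger on the one genuine soft spot: the literal claim $\gamma_{p}(\mathbb{M}_{min})=\widehat{\mathbb{M}}_{min}$ can fail exactly when concretizing by $p$ already refutes some foil at a node of $\mathbb{M}_{min}$, since then $\gamma_{p}(\mathcal{M})$ is no longer $F$-consistent and hence cannot belong to any supremum of consistent models in $\widehat{\mathbb{M}}$. The paper never addresses this, and your ``conservative'' reformulation $\widehat{\mathbb{M}}_{min}=\{\gamma_{p}(\mathcal{M}):\mathcal{M}\in\mathbb{M}_{min},\ \gamma_{p}(\mathcal{M})\text{ consistent with }F\}$ is both correct and exactly what Algorithm~\ref{algo_greedy} and the A$^*$ search actually use (resolved foils are removed, and termination is checked when the residual foil set is empty). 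I would present that version as the proved statement and note the stronger one as holding only when $p$ resolves no foil at any node of $\mathbb{M}_{min}$.
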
 
The above property implies that we don't need to look at the lattice $\mathcal{L}$ to recalculate minimal abstraction set after the application of every concretization function. We can also further simplify our problem by exploiting the fact that a particular propositional concretization resolves a foil (i.e., make the foil no longer valid) when it either adds a precondition (or a new condition for a conditional effect) or a goal fact that can not be satisfied by the foil. To concisely capture this idea we will introduce the concept of a foil resolution set to represent the subset of foils resolved by the concretization of a particular proposition. 
\begin{defn}
{\em 
For a set of models $\mathbb{M}'$, a foil set $F$ and a proposition p, the \textbf{resolution set} $\mathcal{R}_{F}(\mathbb{M}',p)$ gives the subset of foils that no longer holds in the concretized models, i.e $\mathcal{R}_{F}(\mathbb{M}',p) = \{\pi| \pi \in F \wedge (\forall\mathcal{M}'\in \mathbb{M}'(\pi(I_{\gamma_{p}(\mathcal{M}')}) \not\models_{\gamma_{p}(\mathcal{M}')}~G_{\gamma_{p}(\mathcal{M}')}))\}$. 
}
\end{defn}
We will also use $\mathcal{R}_{F}$ to represent the set of foils resolved by a sequence of propositions
\begin{prop}
\label{prop5}
For a set of  model $\mathbb{M}'$ and a foil set $F$
\[\mathcal{R}_{F}(\mathcal{M}',\langle p_1,p_2\rangle) = \mathcal{R}_{F}(\mathcal{M}',\langle p_1\rangle)~\cup~  \mathcal{R}_{F}(\mathcal{M}',\langle p_2\rangle)\]
\end{prop}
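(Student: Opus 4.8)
The plan is to prove the two inclusions $\mathcal{R}_{F}(\mathbb{M}',\langle p_1,p_2\rangle)\supseteq\mathcal{R}_{F}(\mathbb{M}',\langle p_1\rangle)\cup\mathcal{R}_{F}(\mathbb{M}',\langle p_2\rangle)$ and $\mathcal{R}_{F}(\mathbb{M}',\langle p_1,p_2\rangle)\subseteq\mathcal{R}_{F}(\mathbb{M}',\langle p_1\rangle)\cup\mathcal{R}_{F}(\mathbb{M}',\langle p_2\rangle)$ separately; both sides are subsets of $F$ by construction, so it suffices to reason about membership. The two structural facts I would lean on are: (i) \emph{monotonicity of concretization} --- since $\gamma_{p_2}(\gamma_{p_1}(\mathcal{M}'))$ is a concretization of $\gamma_{p_1}(\mathcal{M}')$, it is the less abstract of the two, i.e.\ $\gamma_{p_2}(\gamma_{p_1}(\mathcal{M}'))\sqsubseteq\gamma_{p_1}(\mathcal{M}')$, and by Definition \ref{model_abs} (equivalently, Proposition \ref{prop_abstract} with the empty explanation) a foil that admits no goal-reaching execution in an abstract model admits none in any of its concretizations; and (ii) the assumed \emph{commutativity and idempotence} of the abstraction functions, so that $\gamma_{p_2}(\gamma_{p_1}(\mathcal{M}'))=\gamma_{p_1}(\gamma_{p_2}(\mathcal{M}'))$ and concretizing one of $p_1,p_2$ does not disturb the content the other restores (the two act on disjoint ``coordinates'' of the model).

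For the $\supseteq$ inclusion, suppose $\pi\in\mathcal{R}_{F}(\mathbb{M}',\langle p_1\rangle)$, i.e.\ $\pi$ has no goal-reaching execution in $\gamma_{p_1}(\mathcal{M}')$ for every $\mathcal{M}'\in\mathbb{M}'$. Fixing an arbitrary $\mathcal{M}'\in\mathbb{M}'$ and using $\gamma_{p_2}(\gamma_{p_1}(\mathcal{M}'))\sqsubseteq\gamma_{p_1}(\mathcal{M}')$ together with monotonicity, $\pi$ still has no goal-reaching execution in $\gamma_{p_2}(\gamma_{p_1}(\mathcal{M}'))$; since $\mathcal{M}'$ was arbitrary, $\pi\in\mathcal{R}_{F}(\mathbb{M}',\langle p_1,p_2\rangle)$. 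Running the same argument with $\gamma_{p_1}(\gamma_{p_2}(\mathcal{M}'))=\gamma_{p_2}(\gamma_{p_1}(\mathcal{M}'))$ (commutativity) handles $\pi\in\mathcal{R}_{F}(\mathbb{M}',\langle p_2\rangle)$, which gives the inclusion.

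The $\subseteq$ inclusion is the substantive part and I expect it to be the main obstacle. Take $\pi\in\mathcal{R}_{F}(\mathbb{M}',\langle p_1,p_2\rangle)$; I must show $\pi$ is already resolved by $p_1$ alone over all of $\mathbb{M}'$, or by $p_2$ alone over all of $\mathbb{M}'$. The intended route is the characterization stated just before the proposition: concretizing a proposition $p$ can affect $\pi$ only by restoring $p$ to preconditions / conditional-effect conditions / the goal and by de-randomizing effects that had dropped $p$, so $\pi$ becomes unexecutable-to-the-goal precisely when some restored $p$-condition is falsified along $\pi$'s trace and the residual nondeterminism of the still-abstracted propositions cannot repair it. Because each lattice edge drops a single proposition and the abstractions commute, the $p_1$-labelled content of a model and the truth of $p_1$ along $\pi$'s trace are untouched by $\gamma_{p_2}$ (and symmetrically); hence a failure witness in $\gamma_{p_2}(\gamma_{p_1}(\mathcal{M}'))$ is either a ``$p_1$-witness'' already present in $\gamma_{p_1}(\mathcal{M}')$ or a ``$p_2$-witness'' already present in $\gamma_{p_2}(\mathcal{M}')$. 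The delicate step --- and the one where the componentwise, commutative, idempotent structure of the lattice is genuinely used --- is showing that this choice is \emph{uniform over} $\mathbb{M}'$: one cannot have some nodes of $\mathbb{M}'$ fail only through $p_1$ and others only through $p_2$, because whether restoring $p_i$ falsifies a condition along $\pi$ is determined by the shared base model $\mathcal{M}^{\#}$ and $\pi$, not by which additional propositions a particular node of $\mathbb{M}'$ happens to omit. Once that uniformity is in hand, either every node of $\mathbb{M}'$ fails through $p_1$, giving $\pi\in\mathcal{R}_{F}(\mathbb{M}',\langle p_1\rangle)$, or every node fails through $p_2$, giving $\pi\in\mathcal{R}_{F}(\mathbb{M}',\langle p_2\rangle)$, which closes the argument.
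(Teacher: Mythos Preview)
The paper does not actually prove this proposition; it is stated without argument, the only justification being the informal remark just before the definition of resolution sets that ``a particular propositional concretization resolves a foil \ldots\ when it either adds a precondition (or a new condition for a conditional effect) or a goal fact that can not be satisfied by the foil.'' So there is no paper proof to compare against in any detailed sense.

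Your $\supseteq$ direction via monotonicity of concretization (a foil that fails in an abstraction fails in every concretization of it) together with commutativity is correct and is the obvious argument. For the $\subseteq$ direction you have identified exactly the mechanism the paper's informal remark points to: a failure after concretizing $\{p_1,p_2\}$ must be witnessed by an unsatisfied $p_1$-condition or an unsatisfied $p_2$-condition, since those are the only constraints that concretization adds. You are also right to single out the uniformity-over-$\mathbb{M}'$ step as the nontrivial point, and your resolution of it --- that whether restoring $p_i$ falsifies a condition along $\pi$ depends only on the shared base model $\mathcal{M}^{\#}$ and on $\pi$, not on which other propositions a particular node of $\mathbb{M}'$ happens to omit --- is the intended reading in a proposition-conserving lattice and is what makes the equality hold rather than merely the $\supseteq$ inclusion. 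In short, you have supplied substantially more argument than the paper does, and your plan tracks precisely the intuition the paper gestures at; there is no discrepancy to report.
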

The above property implies that concretizing any $n$ propositions cannot resolve foils that weren't resolved by the individual propositions.
\begin{prop}
\label{prop6}
For two models $\mathcal{M}_1$, $\mathcal{M}_2$ and a set of foils $F$, if $\mathcal{M}_1\sqsubseteq\mathcal{M}_2$ then for any proposition $p$, $\mathcal{R}_{F}(\{\mathcal{M}_1\}, p) \subseteq \mathcal{R}_{F}(\{\mathcal{M}_2\},p)$
\end{prop}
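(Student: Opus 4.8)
The plan is to reduce the claim to a structural fact about how the concretization $\gamma_p$ commutes with the order $\sqsubseteq$, and then transfer the failure of a resolved foil ``downward'' along $f_\Lambda$. To begin, the degenerate case $p\in P_{\mathcal{M}_1}$ is immediate from the lattice conventions (concretizing a proposition already present in $\mathcal{M}_1$ resolves nothing there, so the inclusion is trivial), so assume $p\notin P_{\mathcal{M}_1}$. Since $\mathcal{M}_1\sqsubseteq\mathcal{M}_2$ there is a set $\Lambda\subseteq P_{\mathcal{M}_1}$ with $\mathcal{M}_2=[\mathcal{M}_1]_{f_\Lambda}$, and then $p\notin\Lambda$ and $p\notin P_{\mathcal{M}_2}$, so both $\gamma_p(\mathcal{M}_1)$ and $\gamma_p(\mathcal{M}_2)$ are defined in the \lattype\ lattice $\mathcal{L}$.

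The first step I would carry out is to establish $\gamma_p(\mathcal{M}_1)\sqsubseteq\gamma_p(\mathcal{M}_2)$. From $[\gamma_p(\mathcal{M}_1)]_{f_p}=\mathcal{M}_1$ and commutativity of the abstraction maps, $[[\gamma_p(\mathcal{M}_1)]_{f_\Lambda}]_{f_p}=[[\gamma_p(\mathcal{M}_1)]_{f_p}]_{f_\Lambda}=[\mathcal{M}_1]_{f_\Lambda}=\mathcal{M}_2$, and since in a \lattype\ lattice $\mathcal{M}_2$ has a unique incoming $p$-edge, its source is forced to be $[\gamma_p(\mathcal{M}_1)]_{f_\Lambda}$; hence $\gamma_p(\mathcal{M}_2)=[\gamma_p(\mathcal{M}_1)]_{f_\Lambda}$, i.e.\ $\gamma_p(\mathcal{M}_1)\sqsubseteq\gamma_p(\mathcal{M}_2)$. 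In particular the two concretized models have the same ``$p$-structure'' (identical preconditions and conditional-effect triggers on $p$, identical effects on $p$, and the same membership of $p$ in the goal), and $\gamma_p(\mathcal{M}_2)$ is obtained from $\gamma_p(\mathcal{M}_1)$ purely by projecting out the $p$-free set $\Lambda$.

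The second step is the transfer. I would fix $\pi\in\mathcal{R}_F(\{\mathcal{M}_1\},p)$ and show $\pi(I_{\gamma_p(\mathcal{M}_2)})\not\models_{\gamma_p(\mathcal{M}_2)}G_{\gamma_p(\mathcal{M}_2)}$, which (both model sets being singletons) yields $\pi\in\mathcal{R}_F(\{\mathcal{M}_2\},p)$. By the characterization recalled just before the definition of $\mathcal{R}_F$, the failure of $\pi$ in $\gamma_p(\mathcal{M}_1)$ is \emph{witnessed by $p$}: along a failing execution of $\pi$ in $\gamma_p(\mathcal{M}_1)$ either some action or conditional effect of $\pi$ is blocked by an unmet precondition/trigger on $p$, or the run ends in a state missing the goal literal $p$. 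I would then push that execution through $f_\Lambda$: by Definition \ref{model_abs} its pointwise $f_\Lambda$-image is an execution of the same action sequence $\pi$ in $\gamma_p(\mathcal{M}_2)$ (the extra nondeterminism of $\gamma_p(\mathcal{M}_2)$ is permissive enough to ``replay'' the concrete run with $\Lambda$ erased), and since $p\notin\Lambda$, projecting out $\Lambda$ does not make $p$ true where it was false, does not erase the blocking $p$-precondition/trigger, and does not delete the goal literal $p$ (which remains in $G_{\gamma_p(\mathcal{M}_2)}=G_{\gamma_p(\mathcal{M}_1)}\setminus\Lambda$). Hence the same $p$-witness reappears along this execution and $\pi$ still fails to reach $G_{\gamma_p(\mathcal{M}_2)}$.

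The step I expect to be the main obstacle is this transfer. Abstracting out $\Lambda$ can turn deterministic actions of $\gamma_p(\mathcal{M}_1)$ nondeterministic in $\gamma_p(\mathcal{M}_2)$ and can unblock actions that a $\Lambda$-precondition was stopping, so one cannot merely say ``$\gamma_p(\mathcal{M}_2)$ is an abstraction, hence $\pi$ still fails.'' One must confine all this new nondeterminism to the $\Lambda$-coordinates, so that the truth of $p$ along $\pi$'s execution --- and therefore the $p$-witnessed obstruction --- is untouched, and one must check that the obstruction is not ``washed out'' by the projection, i.e.\ that the goal literal blocked along $\pi$ in $\gamma_p(\mathcal{M}_1)$ still belongs to $G_{\gamma_p(\mathcal{M}_2)}$ (automatic when $p$ itself is that literal, and requiring a short argument via the structural characterization of resolution otherwise). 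This is the one place the concrete abstraction construction used in Example~\ref{rov_exmpl} --- a dropped precondition making the action's effects nondeterministic --- is actually invoked.
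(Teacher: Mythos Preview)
The paper does not actually prove Proposition~\ref{prop6}; it states the claim and immediately passes to its consequence. The nearest thing to an argument is the one-line remark after Proposition~\ref{prop_abstract}, that in a proposition-conserving lattice concretization preserves the abstraction order (so $\gamma_E(\mathcal{M}_i)$ stays ``logically weaker'' than $\gamma_E(\mathcal{M}_H)$). Your plan makes this precise and goes well beyond it: you first show $\gamma_p(\mathcal{M}_1)\sqsubseteq\gamma_p(\mathcal{M}_2)$ via commutativity of the abstraction maps and the uniqueness of the incoming $p$-edge in a proposition-conserving lattice, and then argue that a $p$-witnessed failure of $\pi$ survives projecting out $\Lambda$ because $p\notin\Lambda$. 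That two-step decomposition is exactly the right shape, and it is considerably more careful than anything the paper supplies.

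There is one genuine gap to close. Your transfer step asserts that ``the failure of $\pi$ in $\gamma_p(\mathcal{M}_1)$ is witnessed by $p$,'' invoking the paper's structural characterization of resolution. But that characterization presupposes that $\pi$ was valid in $\mathcal{M}_1$ \emph{before} concretizing by $p$; the proposition as written does not assume this. Without it the claim can fail: if $\pi$ already fails in $\mathcal{M}_1$ because some $\Lambda$-precondition is unmet, then $\pi$ trivially fails in $\gamma_p(\mathcal{M}_1)$ (so $\pi\in\mathcal{R}_F(\{\mathcal{M}_1\},p)$), yet abstracting $\Lambda$ out can unblock that action and let $\pi$ succeed in $\gamma_p(\mathcal{M}_2)$, so $\pi\notin\mathcal{R}_F(\{\mathcal{M}_2\},p)$. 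In the paper's intended use this situation never arises, since the foils in $F$ hold in every model of $\mathbb{M}_{min}$ and hence in any $\mathcal{M}_1$ under consideration; you should state that hypothesis explicitly (or note that it is implicit in context) before invoking the $p$-witness argument. With that added, your plan is sound.
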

The above proposition ensures that if an explanation is the minimal one for $\mathbb{M}_{min}$, then it must be the minimal explanation for $\mathcal{M}_{H}$ as well.

These propositions will be instrumental in proving the effectiveness of our greedy algorithm described by Algorithm \ref{algo_greedy}. In each iteration of this search, the algorithm greedily chooses the proposition that minimizes $\frac{C_{p}}{|F' \cap \mathcal{R}_{F}(\mathbb{M}', p)|}$, where $F'$ is the set of unresolved foils at that iteration and the search ends when all foils are resolved.
\begin{algorithm}[tbp!]
\scriptsize
\caption{Greedy Algorithm for Generating $\widehat{E}$}
\label{algo_greedy}
\begin{algorithmic}[1]
\Procedure{Greedy-exp-search}{}
\vspace{2pt} 
\BState \emph{Input}:~~~~$\langle F, \mathcal{L}=\langle \mathbb{M}, \mathbb{E}, \mathbb{P}, \ell \rangle\rangle$
\BState \emph{Output}: Explanation $\widehat{E}$
\vspace{2pt} 
\BState \emph{Procedure}:  
\State curr\_model = $\langle\mathbb{M}_{min},F\rangle$
\State $\widehat{E} = \{\}$
\State $\mathbb{M}_{min}$~~~~~~~~~~~~$\leftarrow $MinimalAbstractModels($\mathcal{L}, F$)
\State Precompute the resolution sets $\mathcal{R}_{F}(\mathbb{M}_{min}, p)$ for each $p \in \mathbb{P}$
\vspace{2pt}
\While{True}
\vspace{2pt} 
\State $\mathbb{M}',F' = \textrm{curr\_model}$
\If{$|F'| = 0$} return $\widehat{E}$  \Comment{\textcolor{black}{Return $\widehat{E}$ if all the foils are resolved}}
\Else
\State $p_{next} = \argmin_{p}(\frac{C_{p}}{|F' \cap \mathcal{R}_{F}(\mathbb{M}', p)|})$
\State $\mathbb{M}_{new} = \{\gamma_{p_{next}}(\mathcal{M})|\mathcal{M}\in \mathbb{M}'\}$
\State curr\_model = $\langle\mathbb{M}_{new},F\setminus\mathcal{R}_{F}(\mathbb{M}', p)\rangle$
\State $\widehat{E} = \widehat{E} \cup p$
\EndIf
\EndWhile
\EndProcedure
\vspace{4pt} 
\end{algorithmic}
\end{algorithm}
\begin{theorem}
The explanation $\widehat{E}$ generated by Algorithm \ref{algo_greedy} for a set of foils $F$ and a lattice $\mathcal{L}=\langle \mathbb{M}, \mathbb{E}, \mathbb{P}, \ell \rangle$ is less than or equal to $(\ln k)*C_{E_{min}}$, where $C_{E_{min}}$ is the cost of an optimal explanation and $k$ represents the maximum number of foils that can be resolved by concretizing a single proposition, i.e, $k = \max_{p}|\mathcal{R}_{F}(\mathbb{M}_{min}, p)|$.
\end{theorem}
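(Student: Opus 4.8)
The plan is to show that, once the minimal abstraction set $\mathbb{M}_{min}$ is fixed, producing a cheapest explanation is exactly weighted \textsc{Set Cover}, and then to run the textbook analysis of the greedy heuristic. For each $p\in\mathbb{P}$ view $\mathcal{R}_{F}(\mathbb{M}_{min},p)\subseteq F$ as the subset that $p$ ``covers'', with weight $C_{p}$. Proposition~\ref{prop_existence} gives $\bigcup_{p\in\mathbb{P}}\mathcal{R}_{F}(\mathbb{M}_{min},p)=F$, so a cover exists; and by the definition of the minimal abstraction set every foil in $F$ still holds in every model of $\mathbb{M}_{min}$, so ``cover'' and ``resolve'' coincide. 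Iterating Proposition~\ref{prop5}, the set of foils resolved by concretizing a set $E$ of propositions equals $\bigcup_{p\in E}\mathcal{R}_{F}(\mathbb{M}_{min},p)$ regardless of the order of concretization, so $E$ is a valid explanation for $\mathbb{M}_{min}$ exactly when it is a set cover; by Proposition~\ref{prop_abstract} any such $E$ is also valid for $\mathcal{M}_{H}$, and by Proposition~\ref{prop6} a minimum-cost cover for $\mathbb{M}_{min}$ is no more expensive than the optimum for $\mathcal{M}_{H}$, so it is sound to treat $C_{E_{min}}$ as the optimal set-cover cost. Finally I would verify that the rule in Algorithm~\ref{algo_greedy} is the greedy \textsc{Set Cover} rule: with $F'$ the unresolved foils and $\mathbb{M}'$ the models concretized so far, Proposition~\ref{prop5} together with the proposition identifying $\widehat{\mathbb{M}}_{min}$ with repeated concretization yields $F'\cap\mathcal{R}_{F}(\mathbb{M}',p)=F'\cap\mathcal{R}_{F}(\mathbb{M}_{min},p)$, so $\argmin_{p} C_{p}/|F'\cap\mathcal{R}_{F}(\mathbb{M}',p)|$ selects precisely the proposition of least cost per newly covered foil.

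With the reduction in hand the rest is the standard charging argument. Charge to each $\pi\in F$ the price $\rho(\pi)=C_{p}/|F'\cap\mathcal{R}_{F}(\mathbb{M}',p)|$ of the iteration at which $\pi$ is first resolved, so that the cost of $\widehat{E}$ is $\sum_{\pi\in F}\rho(\pi)$. Fix $q$ in an optimal explanation $E_{min}$ and order the foils in $\mathcal{R}_{F}(\mathbb{M}_{min},q)$ by the iteration at which they get resolved; when the $j$-th of them counting from the last is about to be resolved, at least $j$ foils of $\mathcal{R}_{F}(\mathbb{M}_{min},q)$ are still unresolved, so $q$ itself was an admissible choice of ratio at most $C_{q}/j$, and hence the greedy rule charges that foil at most $C_{q}/j$. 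Summing, the foils in $\mathcal{R}_{F}(\mathbb{M}_{min},q)$ absorb a total of at most $C_{q}\sum_{j=1}^{|\mathcal{R}_{F}(\mathbb{M}_{min},q)|}1/j\le C_{q}\,H_{k}$, where $H_{k}=\sum_{j=1}^{k}1/j\le 1+\ln k$ and $k=\max_{p}|\mathcal{R}_{F}(\mathbb{M}_{min},p)|$. Since $\{\mathcal{R}_{F}(\mathbb{M}_{min},q):q\in E_{min}\}$ covers $F$, summing over $q\in E_{min}$ gives $C(\widehat{E})=\sum_{\pi\in F}\rho(\pi)\le H_{k}\sum_{q\in E_{min}}C_{q}=H_{k}\cdot C_{E_{min}}$, i.e.\ at most $(1+\ln k)\,C_{E_{min}}$, which is the claimed bound with $H_{k}$ supplying the $\ln k$ term.

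I expect the reduction, not the charging argument, to be the crux: one must argue carefully that foil resolution is order-independent and that the coverage sets computed against the shrinking set $\mathbb{M}'$ agree with those computed once and for all against $\mathbb{M}_{min}$, both of which rest on Propositions~\ref{prop5} and~\ref{prop6} together with commutativity and idempotence of the abstraction functions. A secondary point to handle is the cost model: since $C$ is meant to count \emph{unique} model updates it need not be additive over unions of propositions, so the identifications $C(\widehat{E})=\sum_{p\in\widehat{E}}C_{p}$ and $C_{E_{min}}=\sum_{q\in E_{min}}C_{q}$ used above (and implicit in the $\argmin$ of Algorithm~\ref{algo_greedy}) should either be adopted as the operative convention or replaced by a sub-additivity remark; under the additive convention the argument goes through verbatim.
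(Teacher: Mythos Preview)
Your proposal is correct and follows the same route as the paper: reduce the problem to weighted \textsc{Set Cover} with universe $F$, sets $\mathcal{R}_{F}(\mathbb{M}_{min},p)$, and weights $C_p$, then invoke the $\ln k$ approximation guarantee for the greedy heuristic. The paper's proof is only a sketch that cites \cite{young2008greedy} for the bound, whereas you spell out the charging argument explicitly and are more careful about two points the paper glosses over---namely, that the greedy rule evaluated against the shrinking $\mathbb{M}'$ coincides with the rule evaluated against the precomputed $\mathcal{R}_{F}(\mathbb{M}_{min},p)$, and that the cost function is being treated as additive over propositions---so your version is, if anything, more complete.
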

\begin{proof} [Proof (Sketch)]
We will prove the above theorem by showing that Algorithm \ref{algo_greedy} corresponds to the greedy search algorithm for a weighted set cover problem. Consider a weighted set cover problem $\langle U, S, W\rangle$ such that the universe set $U = F$, the subcollections set S is defined as $S = \{s_p| p \in \mathbb{P}\}$ where $s_p = \mathcal{R}_{F}(\mathbb{M}_{min},p) $ and the cost of each subset $s_p$ is gives as $W(s_p) = C_{p}$. Proposition \ref{prop5} ensures that the size of resolution set is a submodular and monotonic function. In this setting, the act of identifying a set of propositions that resolve the foil set is identical to coming up with a set cover for $U$ in the new weighted set cover problem. Furthermore, we can show that the optimal set cover $\mathcal{C}_{opt}$ must correspond to the cheapest explanation $E_{min}$ (We can prove this equivalence using Propositions \ref{prop_abstract},\ref{prop_existence} and \ref{prop5}, we are skipping the details of this proof due to space constraints). Algorithm \ref{algo_greedy} describes a greedy way of identifying the cheapest set cover for this weighted set cover problem and thus the minimal explanation for the original problem. For weighted set cover the above greedy algorithm is guaranteed to generate solutions that are at most $\ln k *W(\mathcal{C}_{opt})$ \cite{young2008greedy}, where $k = \max_{s\in S}|s|$  and this approximation guarantee will hold for $E_{min}$ as well.
\end{proof}
We will use this algorithm to both generate solutions and to calculate an inadmissible heuristic for the previously mentioned A* search. For the heuristic generation, we will further simplify the calculations (specifically step 8 in Algorithm \ref{algo_greedy}) by considering an over-approximation of $\mathcal{R}_{F}$. Instead of considering the set of all foils resolved by concretizing each proposition $p$, we will consider the set of foils where $p$ appears in the precondition of one of the actions in it. This set should be a superset for $\mathcal{R}_{F}$ for any proposition.  
\begin{figure}[tbp]
\centering
\includegraphics[width=\columnwidth]{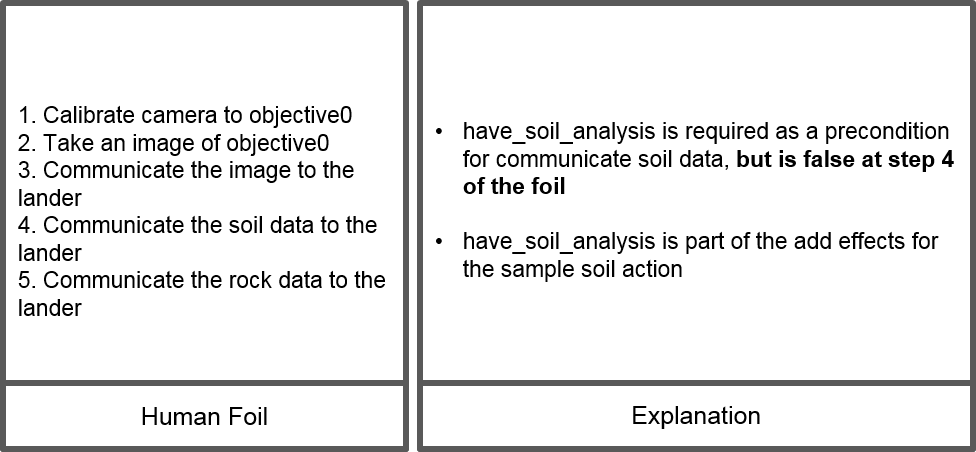}
\caption{{\small An example explanation generated by our system for IPC rover domain. The human incorrectly believes that the rover can communicate sample information without explicitly collecting any samples. While the abstraction lattice in this example was generated by projecting out upto 12 predicates, the search correctly identifies concretizations related to {\small\textit{(have\_soil\_analysis ?r - rover ?w - waypoint)}} as the cheapest explanation ($C_{E} = 2$ as opposed to $C_{\mathbb{P}} = 55$)}}
\label{expl_example}
\end{figure}
\section{Empirical Evaluations}
In our evaluation, we wanted to understand how effective our approaches were in terms of the conciseness of the explanations produced, the solution generation time and the usefulness of approximation. For the approximation, we were interested in identifying the trade-off between decrease in runtime vs. reduction in solution quality.

All three explanation methods discussed in this paper (Blind, heuristic and greedy) were evaluated on five IPC benchmark domains\cite{ipc}. All the experiments detailed in this section were run on an Ubuntu workstation with 12 core Intel(R) Xeon(R) CPU and 64G RAM. 

For each domain, we selected 30 problems from either available test sets or by using standard problem generators (the problems sizes were selected to reflect the size of previous IPC test problems). The lattice for each problem domain pair was generated by randomly selecting 50\% of domain predicates and then generating a fully connected proposition conserving lattice using that set of predicates. Each abstract model was created using ND-operators similar to Example \ref{rov_exmpl}. Each search generates the set of proposition whose concretizations can resolve the foils set $F$. In actual applications, this set of propositions needs to be converted into an explanan (the actual message) by considering how this proposition is used in the robot model. Figure \ref{expl_example} shows the explanation generated by our approach for a problem in Rover domain.

\begin{figure*}
\tiny
  \begin{tabular}{|l|c|c|c|c|c|c|c|c|c|c|c|c|}
    \hline
    \multirow{1}{*}{Domain Name} &
    \multirow{1}{*}{$|\mathbb{P}|$} &\multirow{1}{*}{$C_{\mathbb{P}}$}&\multirow{1}{*}{$|F|$} &
       \multicolumn{3}{c|}{Blind Search (Optimal)}
      & \multicolumn{3}{c|}{Heuristic Search}& \multicolumn{3}{c|}{Greedy Set Cover} \\
      &
      &&
	      &Cost & Size & Time(S) &Cost&Size& Time(S) & Cost&Size & Time(S)  \\
      \hline
      \multirow{4}{*}{Barman} 
      &84.07&7 &1&6.87&1&2.43&6.87&1&2.08&6.87&1&3.61\\
&84&7&2&8.94&1.22&6.35&8.94&1.22&5.71&9.90&1.39&6.05\\
&90.7&7&4&17.19&1.77&24.99&17.19&1.77&23.7&18.45&1.97&10.34\\

      \hline
       \multirow{4}{*}{Rover} 
       &168.66&12&1&3.58&1&7.86&3.58&1&5.22&3.58&1&19.18\\
&188.83&12&2&6.13&1.48&51.36&6.12&1.48&34.04&6.26&1.52&30.5\\
&192.83&12&4&10.87&2&203.83&10.87&2&181.87&11.42&2.19&49.32\\
       \hline
      \multirow{4}{*}{Satellite} 
      &53.01&4&1&18.73&1&2.23&18.73&1&1.92&18.73&1&1.49\\
&60.77&4&2&32&1.61&7.21&32&1.6&5.86&32.53&1.7&3.04\\
&62.73&4&4&43.27&2.29&18.67&43.27&2.29&16.42&43.88&2.39&5.85\\
       \hline
      \multirow{4}{*}{Woodworking} 
      &156.71&7&1&14.45&1&2.84&14.45&1&2.23&14.45&1&3.35\\
&146.33&7&2&20.62&1.21&6.88&20.62&1.21&4.93&21.38&1.38&6.25\\
&154&7&4&28.62&1.69&24.70&28.62&1.69&19.49&30.41&2&12.13\\

      \hline
      \multirow{4}{*}{Sokoban} 
      &220.6&3&1&51.21&1&1.51&51.21&1&1.35&51.21&1&1.28\\
&151.72&3&2&94.52&1.55&3.93&94.52&1.55&3.35&98.31&1.73&2.59\\
&220.69&3&4&136.41&2.22&8.75&136.41&2.22&8.3&141.93&2.37&5.23\\
    \hline
  \end{tabular}
  \raisebox{-0.5\height}{\includegraphics[width=.30\textwidth]{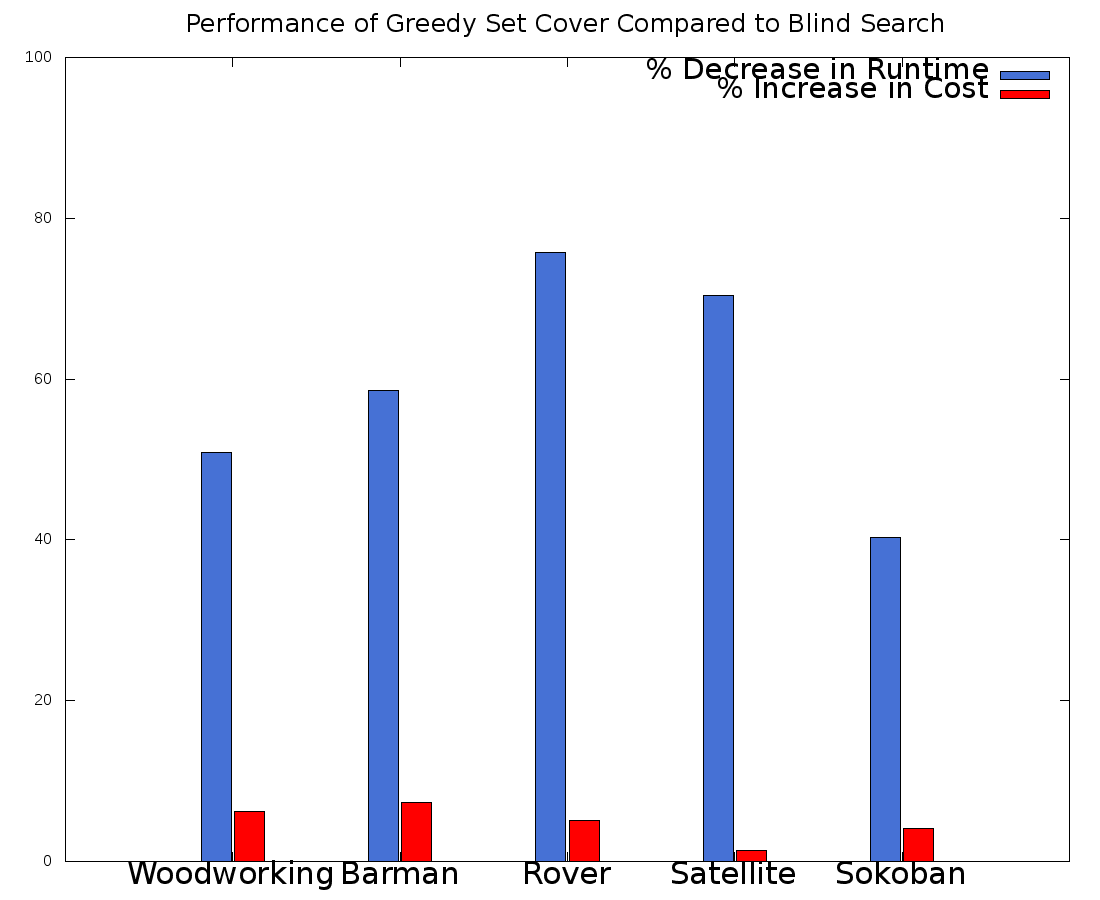}}
\caption{
{\small Table showing runtime/cost for explanations generated for standard IPC domains.Column $|\mathbb{P}|$ represents number of predicates that were used in generating the lattice, while $C_{\mathbb{P}}$ represents the cost of an explanation that tries to concretize all propositions in $\mathbb{P}$ and provides an upper bound on explanation cost. The graph on right side compares the performance of greedy set cover against the optimal blind search for $|F|=4$. It plots the average time saved by the set cover and the average increase in cost of the solution for each domain.}
}
\label{tab1}
\end{figure*}

The table in Figure \ref{tab1} presents the results from our empirical evaluation on the IPC domains. The table shows the average cost/size of each explanation along with the time taken to generate them. Note that by size, we refer to the no of predicates that are part of the explanation while the cost reflects the total number of unique model updates induced by that explanation. We attempted explanation generation for foil set sizes of one, two and four per problem.

The first point of interest is that the heuristic search seems to outperform blind search in almost every problem and generates near-optimal solutions (Blind search always generates the minimal explanation). Further, we saw that greedy search outperformed heuristic search in most cases barring a few exceptions. The greedy was able to make significant gains especially for higher foil sizes. This is entirely expected due to the fact that step 8 in Algorithm \ref{algo_greedy} can be expensive for problems with long plans (but still polynomial). This expensive pre-computation pays off as we move to cases where $E_{min}$ consists of multiple propositions. Additionally, we found out that greedy solutions were quite comparable to the optimal solutions with respect to their costs.
\section{Robot Demonstration}
\begin{figure}[tbp]
\centering
\includegraphics[width=\columnwidth]{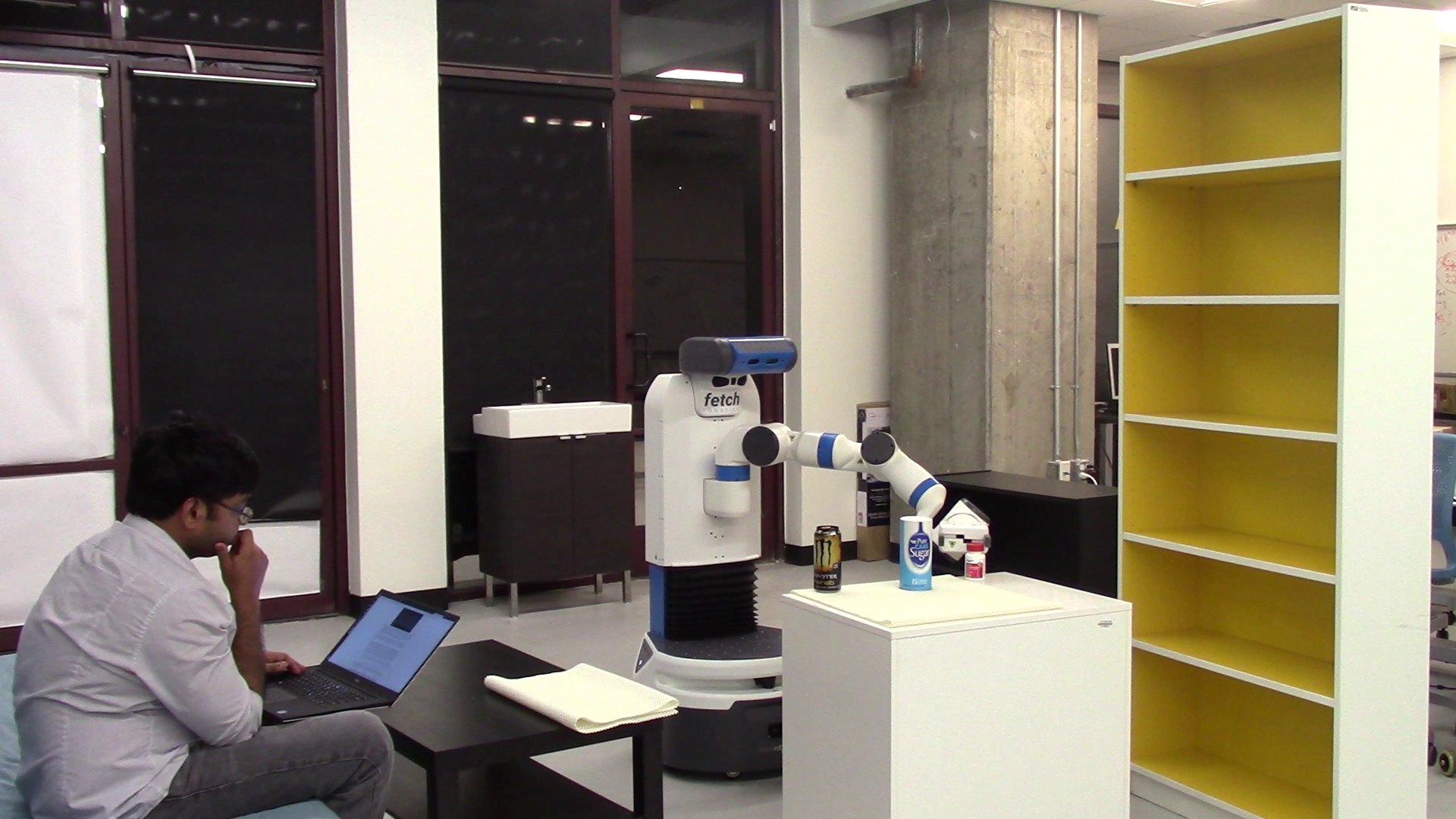}
\caption{{\small The grocery putaway domain setting.}}
\label{groc_robot}
\end{figure}
\begin{figure}[tbp]
\centering
\begin{tabular}{|l|}
\hline
\\ 
    (:action pickup\\
        :parameters (?x - item ?y - storage  ?u - pose ?v - traj)\\
        :precondition (and\\
\hspace{1cm}(is\_pickup\_pose ?u ?x)\\
\hspace{1cm}(is\_collision\_free\_traj ?x ?y ?u ?v)\\
\hspace{1cm}(in ?x ?y)\\
\hspace{1cm}(handempty)\\
        )\\
        :effect (and \\
\hspace{1cm}(not (handempty))\\
\hspace{1cm}(not (in ?x ?y))\\
\hspace{1cm}(holding ?x)\\
\hspace{1cm}(increase (total-cost) 1)\\
        )\\
    )\\
\\
    (:action place\_in\_high\_shelf\\
        :parameters (?x - item ?y - storage ?u - pose ?v - traj)\\
        :precondition (and\\
\hspace{1cm}(is\_putdown\_pose ?u ?x)\\
\hspace{1cm}(is\_collision\_free\_traj ?x ?y ?u ?v)\\
\hspace{1cm}(is\_condiment\_type ?x)\\
\hspace{1cm}(holding ?x)\\
\hspace{1cm}(is\_high\_shelf ?y)\\
        )\\
        :effect (and\\ 
\hspace{1cm}(handempty)\\
\hspace{1cm}(in ?x ?y)\\
\hspace{1cm}(not (holding ?x))\\
\hspace{1cm}(item\_putaway ?x)\\
\hspace{1cm}(increase (total-cost) 1)\\
        )\\
    )\\
\hline
\end{tabular}
\caption{{\small The action definitions for \textsf{pickup} and \textsf{place\_in\_high\_shelf} from the most concrete model.}}
\label{concr-act}
\end{figure}
This section describes a demonstration of our approach on a physical robot for a simple grocery putaway task. Figure \ref{groc_robot} presents the basic setup for the task. The goal of the robot here is to put away a bottle of tablets, a can of energy drink and a jar of sugar to proper storage locations. The storage location of each object is decided based on its type, for example, the robot should place the medicine bottle in the medicine cabinet, the sugar jar in the high pantry shelf, while the energy drink needs to be handed over to the human. In addition to these task-level constraints, the robots operations are restricted by various motion level constraints that limit the possible physical movements that the robot can perform. For example, given the current position of the sugar jar on the table, the robot couldn't come up with any pickup pose that would allow the robot to place the sugar jar on the high shelf. In such cases, the robot could always enlist the help of the human to complete the plan.

In this setting, we will assume that the most concrete robot model consists of action descriptions that include both task-level symbols as well as continuous geometric arguments. Figure \ref{concr-act} presents the definitions for {\small \textsf{pickup}} and {\small \textsf{place\_in\_high\_shelf}} actions in the most concrete model. In this model, the arguments of type ?pose and ?traj represents the pickup/putdown pose (the position and orientation of the end effector) and motion plans followed by the robot to perform the pickup/putdown.
\begin{figure}[tbp]
\centering
\includegraphics[width=\columnwidth]{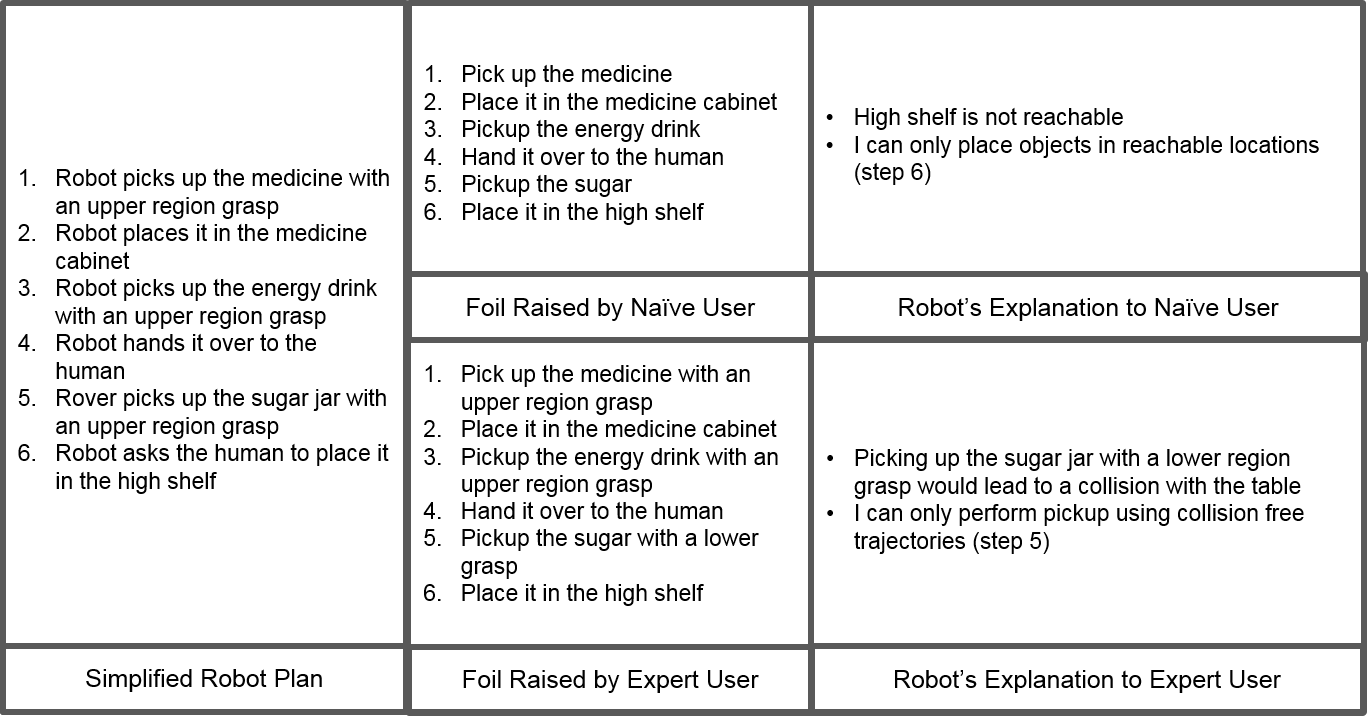}
\caption{{\small The plan and foils used in the scenario.}}
\label{expl_robot_example}
\end{figure}
For this demo, we will consider a non proposition conserving model lattice that spans multiple levels of abstractions. Starting out we can convert each of the continuous arguments into geometric symbols (this is similar to the approach used in \cite{srivastava2014combined}). Next, we will further abstract the poses to align with possible regions on the object (i.e., pick up the object from the bottom, middle or top). We will also consider abstractions where we combine the predicates {\small \textsf{is\_putdown\_pose}} and {\small \textsf{collision\_free\_trajectory}} into a single predicate called {\small \textsf{reachable}} predicate and also create new models by dropping arguments from the actions (We only drop an argument when none of the predicates use this argument). Figure \ref{abs-robot} presents an intermediate model where most of the geometric predicates are already abstracted out. There could also be additional non-geometric predicates (like the {\small \textsf{is\_condiment\_type}} predicate) that can be abstracted out.

Figure \ref{expl_robot_example} presents a plan and possible foils that could be generated in this domain. The plan involves the robot placing the energy drink and medicine on its own but relying on the human to complete the place action for sugar jar. The naive user asks merely why the robot doesn't finish the plan on its own while the expert user provides specific grasp that she/he believes can help the robot complete the plan. While the explanation to the naive user relies on the high-level predicate {\small \textsf{reachable}}, the expert explanation is more detailed and relates to the fact that a lower region grasp will result in a collision with the table. We can present such collisions to the human by simulating the trajectories using tools like Rviz \cite{rviz}.

The readers can view the demonstration of the scenario implemented on a fetch robot at \protect{\url{https://youtu.be/qUHg8RABjsw}}. OpenRAVE~\cite{diankov_thesis} was used to compute the trajectories of the robot arm for the pickup and place actions. COLLADA \cite{collada} models of the furniture and the items were created and populated in the OpenRAVE environment using AR markers, the transformation of which was obtained using the ar\_track\_alvar package available in ROS. Resulting OpenRAVE trajectories were then converted into ROS JointTrajectory messages and executed on the robot.
\begin{figure}[tbp]
\centering
\begin{tabular}{|l|}
\hline
\\ 
    (:action pickup\\
        :parameters (?x - item ?y - storage)\\
        :precondition (and\\
\hspace{1cm}(in ?x ?y)\\
\hspace{1cm}(handempty)\\
\hspace{1cm}(reachable ?y)\\
        )\\
        :effect (and \\
\hspace{1cm}(not (handempty))\\
\hspace{1cm}(not (in ?x ?y))\\
\hspace{1cm}(holding ?x)\\
\hspace{1cm}(increase (total-cost) 1)\\
        )\\
    )\\
\\
    (:action place\_in\_high\_shelf\\
        :parameters (?x - item ?y - storage)\\
        :precondition (and\\
\hspace{1cm}(is\_condiment\_type ?x)\\
\hspace{1cm}(holding ?x)\\
\hspace{1cm}(reachable ?y)\\
\hspace{1cm}(is\_high\_shelf ?y)\\
        )\\
        :effect (and\\ 
\hspace{1cm}(handempty)\\
\hspace{1cm}(in ?x ?y)\\
\hspace{1cm}(not (holding ?x))\\
\hspace{1cm}(item\_putaway ?x)\\
\hspace{1cm}(increase (total-cost) 1)\\
        )\\
    )\\
\hline
\end{tabular}
\caption{{\small The action definitions for \textsf{pickup} and \textsf{place\_in\_high\_shelf} from an abstract model.}}
\label{abs-robot}
\end{figure}
\section{Related Works}
There is increasing interest within the automated planning community to solve the problem of generating explanations for plans (\cite{danmaga}). Earlier works like \cite{seegebarth,bercher,kambhampati1990classification}  looked at explanations as a way of describing the effects of plans, while works like \cite{sohrabi,meadows} looked at plans itself as explanations for a set of observations. Another approach that has received a lot of interest recently is to view explanations as a way of achieving model reconciliation\cite{explain}. Such explanations are referred to as MRP explanations and this approach postulates that the goal of an explanation is to update the model of the observer so they can correctly evaluate the plans in question.

Similar to MRP, we can also see our explanations as model updates, but we focus on a specific type of update, namely model concretization. Unlike MRP we do not make any assumptions about the availability of human model or the human's computational capabilities. The assumption that we have access to foils help us scale to much larger problems as compared to the original MRP approach. Following the conventions of the original MRP paper, we can see that the explanations studied here are both complete and monotonic. 

The idea of using foils or counterexamples to drive model refinement has also been studied in model checking community under the banner of ``Counter-example Guided Abstraction Refinement" or CEGAR \cite{clarke2000counterexample}. Many planning works have also successfully used CEGAR based methods to generate abstraction heuristics (\cite{seipp2013counterexample,seipp2014diverse}). Even though related, we do not believe that vanilla CEGAR methods can address the problems studied here. Firstly, CEGAR works do not consider model uncertainty which is central to our explanation generation problem. To the best of our knowledge, CEGAR based methods do not assign any costs to refinements, and therefore would not be able to identify the minimal explanation for a given foil set.  Finally, since we are considering sets of foils, it may be prohibitively expensive to follow CEGAR approaches and test each foil in the most concrete model to identify specific faults.

Many abstraction schemes have been proposed for planning tasks (starting with \cite{sacerdoti1974planning}), but in this paper, we mainly focused on state abstractions and 
based our formulation on previous works like \cite{srivastava2016metaphysics} and   \cite{backstrom2013bridging}.
 It would be interesting to see how we can extend the approaches discussed in this paper to handle temporal and procedural abstractions (e.g., HLAs \cite{marthi2007angelic}).
\section{Conclusion and Discussion}
In this paper, we investigated the problem of generating explanations when the explainee understands the task model at a lower levels of abstraction. We looked at how we can use explanations as concretization for such scenarios and proposed algorithms for generating minimal explanations. One unique aspect of our approach is the use of foils as a way of capturing human confusion about the problem. This not only helps us formulate more efficient explanation generation methods but also aligns with how humans ask for explanations. 
Moreover, in most real-world scenarios when we expect someone to explain something, we include the foil in the request for the explanation unless the foil is quite apparent from the context. 

Future directions include extending the methods to handle models that are incorrect in addition to being imprecise and looking at other possible methods for abstraction. 


\section*{Acknowledgments} The authors would like to thank Midhun Pookkottil Madhusoodanan for helping setup the robot demo that is described in the paper. This research is supported in part by the ONR grants N00014161-2892, N00014-13-1-0176, N00014- 13-1-0519, N00014-15-1-2027, and the NASA grant NNX17AD06G.
\bibliographystyle{aaai}
\bibliography{bib}
\end{document}